\theoremstyle{plain}
\newtheorem*{lemma}{Lemma}
\newcommand{\skl}{$\sqrt{\text{KL}}$}
\title{Scaling Laws for Reward Model Overoptimization}
\author{%
  Leo Gao \\
  OpenAI
  \And
  John Schulman \\
  OpenAI
  \And
  Jacob Hilton \\
  OpenAI

}
\begin{document}

\maketitle

\begin{abstract}
  In reinforcement learning from human feedback, it is common to optimize against a reward model trained to predict human preferences. Because the reward model is an imperfect proxy, optimizing its value too much can hinder ground truth performance, in accordance with Goodhart's law. This effect has been frequently observed, but not carefully measured due to the expense of collecting human preference data.  In this work, we use a synthetic setup in which a fixed ``gold-standard'' reward model plays the role of humans, providing labels used to train a proxy reward model. We study how the gold reward model score changes as we optimize against the proxy reward model using either reinforcement learning or best-of-$n$ sampling. We find that this relationship follows a different functional form depending on the method of optimization, and that in both cases its coefficients scale smoothly with the number of reward model parameters. We also study the effect on this relationship of the size of the reward model dataset, the number of reward model and policy parameters, and the coefficient of the KL penalty added to the reward in the reinforcement learning setup. We explore the implications of these empirical results for theoretical considerations in AI alignment.
  
\end{abstract}

\section{Introduction}

Goodhart's law is an adage that states, ``When a measure becomes a target, it ceases to be a good measure.'' In machine learning, this effect arises with proxy objectives provided by static learned models, such as discriminators and reward models. Optimizing too much against such a model eventually hinders the true objective, a phenomenon we refer to as \textit{overoptimization}. It is important to understand the size of this effect and how it scales, in order to predict how much a learned model can be safely optimized against. Moreover, studying this effect empirically could aid in the development of theoretical models of Goodhart's law for neural networks, which could be critical for avoiding dangerous misalignment of future AI systems.

In this work, we study overoptimization in the context of large language models fine-tuned as reward models trained to predict which of two options a human will prefer. Such reward models have been used to train language models to perform a variety of complex tasks that are hard to judge automatically, including summarization \citep{summarization}, question-answering \citep{webgpt, gophercite}, and general assistance \citep{instructgpt, anthropicrlhf1,glaese2022improving}. Typically, the reward model score is optimized using either policy gradient-based reinforcement learning or best-of-$n$ sampling, also known as rejection sampling or reranking. Overoptimization can occur with both methods, and we study both to better understand whether and how overoptimization behaves differently across both methods.

A major challenge in studying overoptimization in this context is the expense of collecting human preference labels. A large number of labels are required to accurately estimate overall preference probabilities, and this is exacerbated by small effect sizes and the need to take many measurements in order to fit scaling laws. To overcome this, we use a synthetic setup that is described in Section \ref{methodology}, in which labels are supplied by a ``gold-standard'' reward model (RM) instead of humans.

Our main results are empirically validated functional forms for the gold reward model scores $R$ as a function of the Kullback--Leibler divergence from the initial policy to the optimized policy $\text{KL} := D_{\text{KL}}\left(\pi\parallel\pi_{\text{init}}\right)$, which depends on the method of optimization used. This KL distance between the initial and optimized policies increases monotonically during during RL training (\cref{fig:kl_sweep_kl}), and can be computed analytically as a function of $n$ for BoN. Further, because it is a quadratic metric of distance \citep[Section 4.3]{anthropicrlhf1}, we will define $d := \sqrt{D_{\text{KL}}\left(\pi\parallel\pi_{\text{init}}\right)}$, and write our functional forms in terms of $d$.

We find empirically that for best-of-$n$ (BoN) sampling,
\[\boxed{R_{\text{bo}n}\left(d\right) = d\left(\alpha_{\text{bo}n}-\beta_{\text{bo$n$}}d\right)},\]

and for reinforcement learning,\footnote{We note that this form likely does not hold near the origin, as it has infinite slope there. We experimented with a number of different forms, but found worse fits and extrapolation. See \cref{rl_form_appx} for more details.}
\[\boxed{R_{\text{RL}}\left(d\right) = {d}\left(\alpha_{\text{RL}}-\beta_{\text{RL}}\log{d}\right)},\]

Here, $R(0) := 0$ by definition and $\alpha_{\text{RL}}$, $\beta_{\text{RL}}$, $\alpha_{\text{bo$n$}}$ and $\beta_{\text{bo$n$}}$ are parameters that may depend on the number of proxy reward model parameters, the size of the proxy reward model dataset, and so on. We see that these scaling laws make accurate predictions.

\begin{figure}[hp]
    \centering
    \begin{subfigure}{\linewidth}
    \includegraphics[width=\linewidth]{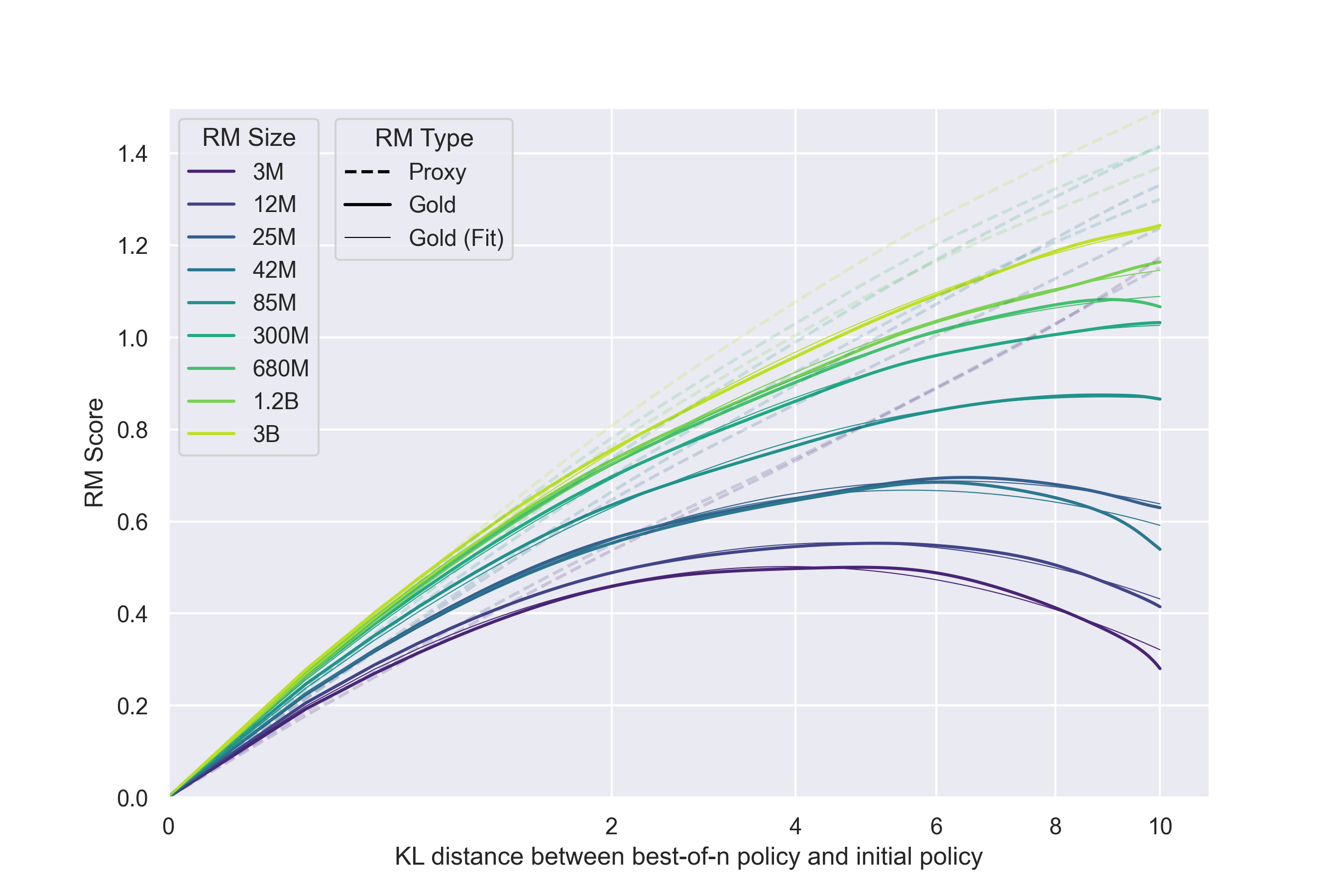}
    \caption{BoN}
    \label{fig:a_bon_rmsweep}
    \end{subfigure}
    \begin{subfigure}{\linewidth}
    \centering
    \includegraphics[width=\linewidth]{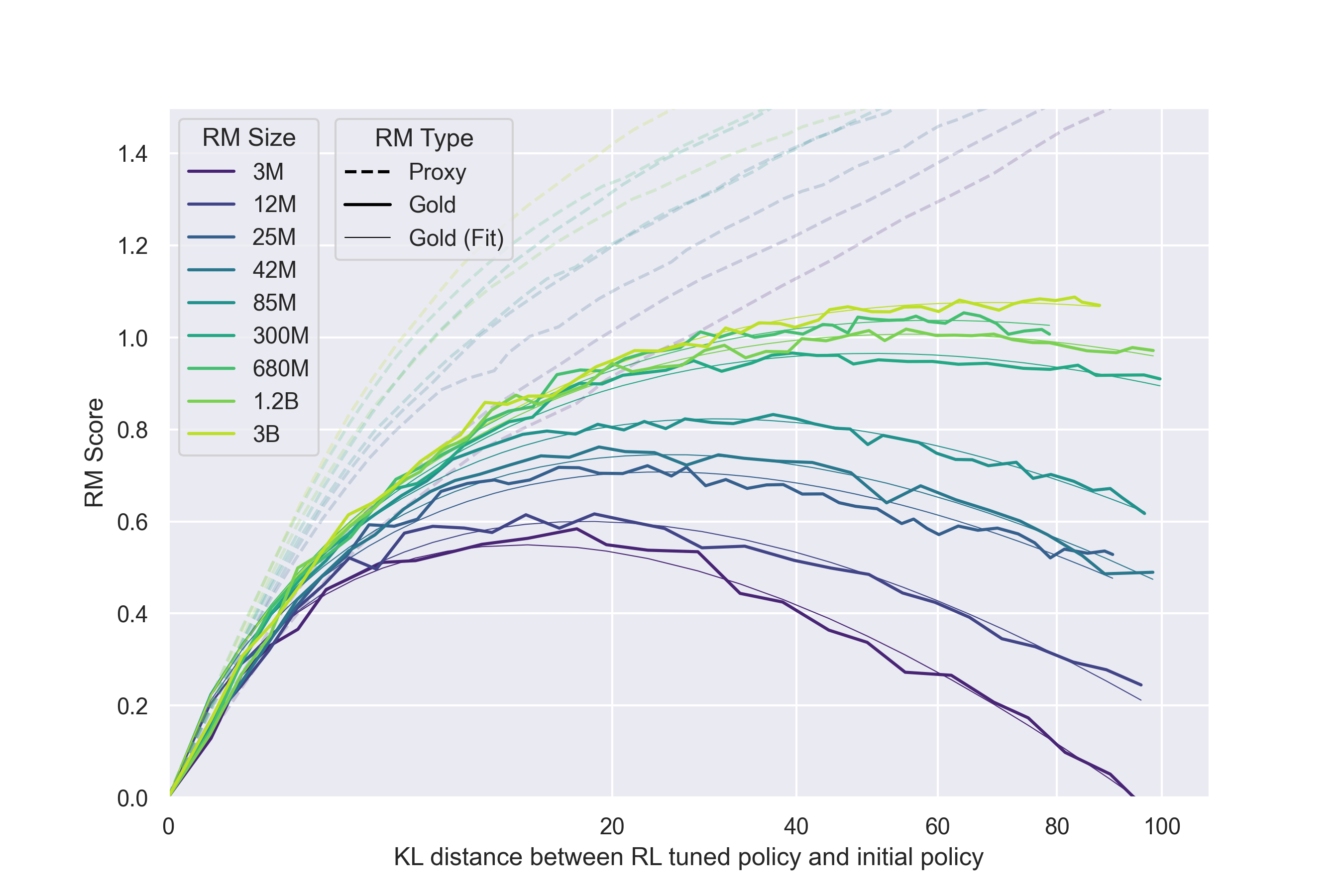}
    \caption{RL}
    \label{fig:a_rl_rmsweep}
    \end{subfigure}
    \caption{Reward model (RM) parameter size scaling experiments using the InstructGPT environment. Policy size is held constant (\texttt{1.2B}), while reward model size is varied. The x-axes have a square-root scale. Note that the plots have different x-axes. The gold reward represents the ground truth reward; we observe that when we optimize for a learned proxy of the gold reward, the gold reward initially increases and later decreases. We show that our functional forms fit this effect well.}
    \label{fig:rmsweep}
\end{figure}

We also find the following.
\begin{itemize}
    \item \textbf{RL versus best-of-$n$.} As a function of the KL divergence, reinforcement learning tends to be slower than best-of-$n$ sampling at both optimization and overoptimization. This suggests inadequacies with using KL to compare amount of (over)optimization across methods. However, the relationship between the proxy reward model score and the gold reward model score is similar for both methods.

    \item \textbf{Smooth coefficient scaling.} The $\alpha$ and $\beta$ coefficients in the BoN and RL functional forms vary smoothly with the number of proxy reward model parameters, following approximate logarithmic trends.\footnote{The coefficient $\alpha_{\text{RL}}$ in particular being nearly independent of RM parameter count.} This allows prediction of attained gold RM score.
    \item \textbf{Weak dependence on policy size.} While larger policies perform better overall and benefit less from optimization against an RM as measured by increase in gold reward, they lead to very similar amounts of overoptimization, as measured through the gap between the proxy and gold scores (which indicates the shortfall between predicted and actual reward), and KL distance at which the maximum gold RM score is attained.
    \item \textbf{KL penalty ineffectiveness.} In our reinforcement learning setup, using a KL penalty increases the proxy reward model score that can be achieved for a given KL divergence, but this does not correspond to a measurable improvement in the gold RM score--$\text{KL}_{\text{RL}}$ frontier. However, we note this result could be particularly sensitive to hyperparameters. 
\end{itemize}

Finally, we discuss the implications of these findings for Reinforcement Learning From Human Feedback (RLHF), existing models of Goodhart's law, and AI Alignment more broadly.

\section{Methodology}\label{methodology}

The setting used throughout this paper is the same as for InstructGPT \citep{instructgpt}. In our environment, the observations are text prompts and the policy is used to generate a response to the prompt. The prompts are drawn from a broad range of natural language instructions describing different language model tasks. Then, a learned RM is used to provide the reward signal for the response, which is used by either RL or BoN for optimization.

For all experiments, we use pretrained GPT-3 series language models as the initial checkpoint \citep{GPT3}. All initial policies are trained with supervised fine-tuning (SFT) on human-generated InstructGPT demonstrations \citep{instructgpt} for 2 epochs. All RMs also use the GPT-3 architecture but have an added scalar head to output the reward. 

The RL experiments use Proximal Policy Optimization (PPO) \citep{schulman2017proximal}. KL penalty for all RL experiments is set to 0 except for in \cref{kl_sweep_sec}. See \cref{hparams} for all other hyperparameters. We mostly use defaults for the PPO hyperparameters; thus, it is possible that there exist different trends for other hyperparameter configurations.

In BoN, we generate $n$ trajectories for the policy and use the reward model to pick the one with the highest proxy RM score. We use the unbiased estimator from \citet[Appendix I]{webgpt} to compute all of the gold and proxy scores for intermediate $n$ between 1 and the maximum $n$ with lower variance and more efficiently than the naive estimator of randomly sampling $n$ samples with replacement repeatedly and taking the mean of the maximum gold and proxy RM scores. The KL distances for BoN are computed analytically: $\text{KL}_{\text{bo}n} = \log n - \frac{n-1}{n}$ \citep[Appendix G.3]{summarization}. 

\subsection{Synthetic Data Setup}

Because getting a ground truth gold reward signal from human labellers is expensive, we instead use a synthetic task where the ground truth is defined to be the output of a particular large ``gold'' RM. The 6B reward model from \citet{instructgpt} is used as the gold RM, and our proxy RMs vary from 3M to 3B parameters\footnote{We originally trained two additional RMs smaller than 3M parameters, which achieved near-chance accuracy and were off-trend, and so were excluded.}. This synthetic gold reward is used to label pairs of rollouts from the policy given the same prompt to create synthetic RM training data. The synthetic comparisons are created deterministically by always marking the trajectory with the higher gold RM score as preferred.\footnote{We had experimented with sampling for creating labels, but observed noisier results.} We generate 100,000 synthetic comparisons and reserve 10\% of these as a held out test set for computing the validation loss of RMs.

See \cref{fig:setup_dia} for a diagram of the synthetic setup.

\subsection{Recalibration}\label{norm_and_cal}

The RM scores are translation-invariant, so to ensure comparability across different reward models, we recenter each RM such that the average reward of the initial policy is 0. We also unit normalize the variance of the gold RM scores.\footnote{We later decided this was unnecessary but decided not to change it.} Because our hard thresholding synthetic data setup produces labels that are miscalibrated (since they do not incorporate the gold RM's confidence), we recalibrate the proxy RMs by rescaling the logits to minimize cross-entropy loss using a validation set of soft labels. All renormalization and recalibration is applied after the experiments; this does not affect BoN at all, and likely has no impact on RL because Adam is loss scale invariant, though it is possible that there are slight differences due to algorithmic details.

\begin{figure}[t]
    \centering
    \includegraphics[width=\linewidth]{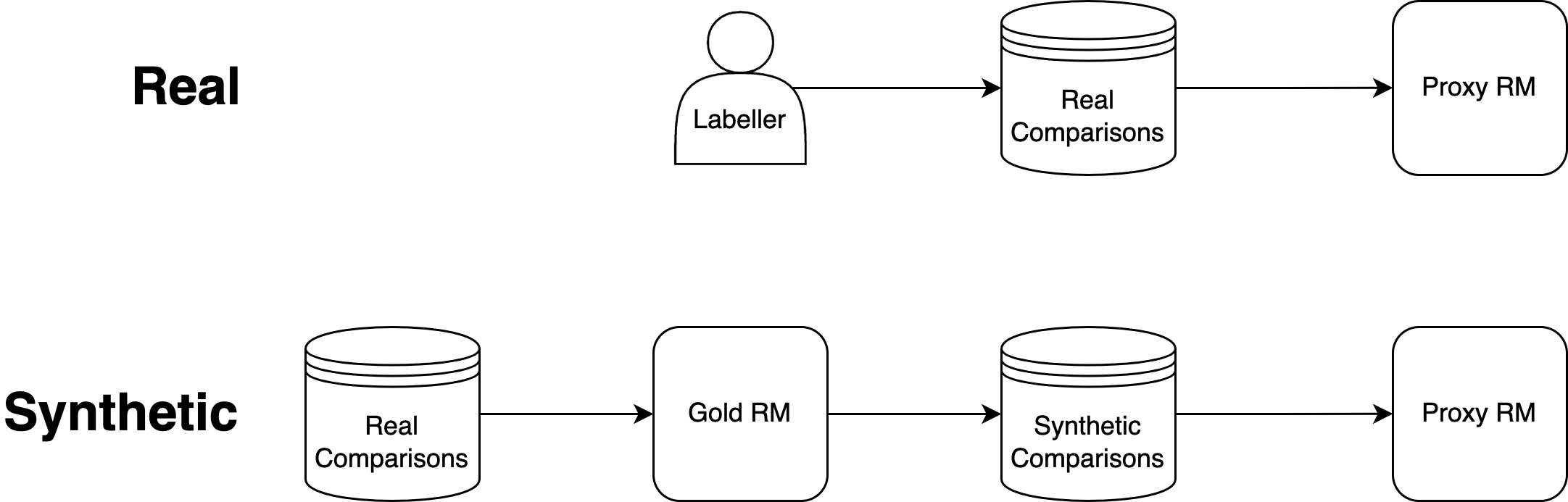}
    \caption{Diagram of the real and synthetic RM training setups. Human labellers generate comparison data. In the real RLHF setting, this data is used to train a proxy RM that is optimized by RL/BoN. In our synthetic setting, we instead use a ``Gold RM'' as our ground truth. In both settings, the proxy RM is a proxy for the ground truth process generating the labels (either the human or gold RM).}
    \label{fig:setup_dia}
\end{figure}
\section{Results}

\subsection{Fitting and validating functional forms}\label{validating_closedforms}

We chose our functional forms through experimentation with all RM data and parameter scaling curves in the remainder of this paper.

The BoN functional form was hypothesized using data up to $n = 1000$. In order to validate the functional forms, we performed a BoN experiment with up to $n = 60,000$ (KL $\approx$ 10 nats), after only having seen data up to $n = 1,000$ (KL $\approx$ 6 nats). As this experiment was conducted after the functional form was hypothesized based on data up to 6 nats, this was a true advance prediction.

We also test extrapolation of the BoN and RL functional forms from low KLs to to unseen larger KLs; see \cref{fig:rmsweep_extrap} for details.

We also attempted to model the proxy scores but were unable to obtain a satisfactory fit. For BoN, despite visual similarity, a linear fit ($d\alpha_{\text{bo}n}$) did not work well (\cref{fig:a_bon_rmsweep_proxy_fit}). The predictions for RL and BoN are not as easily modelled as the gold score predictions. We leave a better understanding of the proxy RM score behavior to future work.

\subsection{Scaling with RM Parameter Count}\label{scaling_rm}

We hold policy size (1.2B) and data size (90,000) constant (\cref{fig:rmsweep}). We observe that for the gold RM scores, $\alpha_{\text{bo}n}$ and $\beta_{\text{bo}n}$ change smoothly with RM size (\cref{fig:a_bon_rmsweep_fit_alpha,fig:a_bon_rmsweep_fit_beta}). For RL, we find that we can hold $\alpha_{\text{RL}}$ constant across all RM sizes, resulting in a clean scaling curve for $\beta_{RL}$ (\cref{fig:a_rl_rmsweep_fit_beta}). These scaling laws allow us to predict properties of training runs; for instance, we can also predict the peak gold RM scores for different RM sizes (\cref{fig:a_max_gold_score}). 

When modelled using the same functional forms as the respective gold scores, the proxy score fits have much lower values of $\beta_{\text{bo}n}$. We also see smooth scaling in the proxy score's $\alpha_{\text{bo}n}$ and $\beta_{\text{bo}n}$. However, for the reasons in \cref{validating_closedforms}, we are less confident about these fits. For both BoN and RL, we observe systematic underestimates of the proxy reward model when extrapolated to higher KLs. Both appear to eventually grow roughly linearly in \skl, as in \citet{anthropicrlhf1}.

\begin{figure}[h] 
    \centering
    \begin{subfigure}[h]{0.3\linewidth}
    \includegraphics[width=\linewidth]{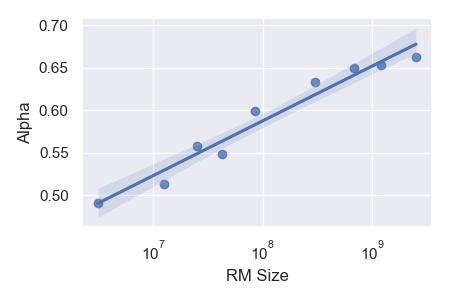}
    \caption{$\alpha_{\text{bo}n}$}
    \label{fig:a_bon_rmsweep_fit_alpha}
    \end{subfigure}
    \begin{subfigure}[h]{0.3\linewidth}
    \centering
    \includegraphics[width=\linewidth]{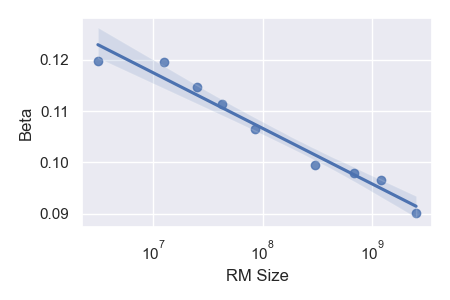}
    \caption{$\beta_{\text{bo}n}$}
    \label{fig:a_bon_rmsweep_fit_beta}
    \end{subfigure}
    \begin{subfigure}[h]{0.3\linewidth}
    \centering
    \includegraphics[width=\linewidth]{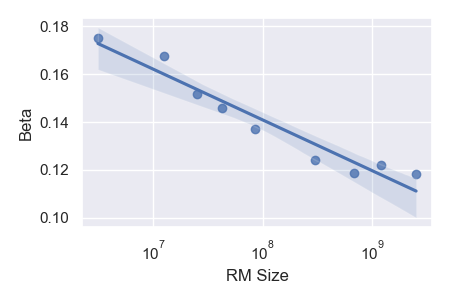}
    \caption{$\beta_{\text{RL}}$}
    \label{fig:a_rl_rmsweep_fit_beta}
    \end{subfigure}
    \caption{The values of $\alpha_{\text{bo}n}$, $\beta_{\text{bo}n}$ and $\beta_{\text{RL}}$ in the BoN and RL overoptimization scaling laws for both proxy (dashed line) and gold (solid line) rewards as they scale with parameter count.}
    \label{fig:rmsweep_fit}
\end{figure}

\subsection{Scaling with RM Data Size}\label{data_scaling}

We hold RM size constant (12M) and sweep RM data size for both RL and BoN.\footnote{For BoN, we actually sweep all combinations of RM size and data size; see \cref{fig:a_datasweep_data_param_tradeoff}. For a version of \cref{fig:a_bon_datasweep} against a 3B RM, see \cref{fig:a_bon_datasweep_3brm}.}. Overall, the results are consistent with intuition: more data leads to better gold scores and less goodharting. The scaling of $\alpha$ and $\beta$ with data size are not as cleanly described as for RM size scaling (\cref{fig:a_bon_datasweep_alpha}, \cref{fig:a_bon_datasweep_beta}). 

\begin{figure}[h]
    \centering
    \begin{subfigure}[h]{0.49\linewidth}
    \includegraphics[width=\linewidth]{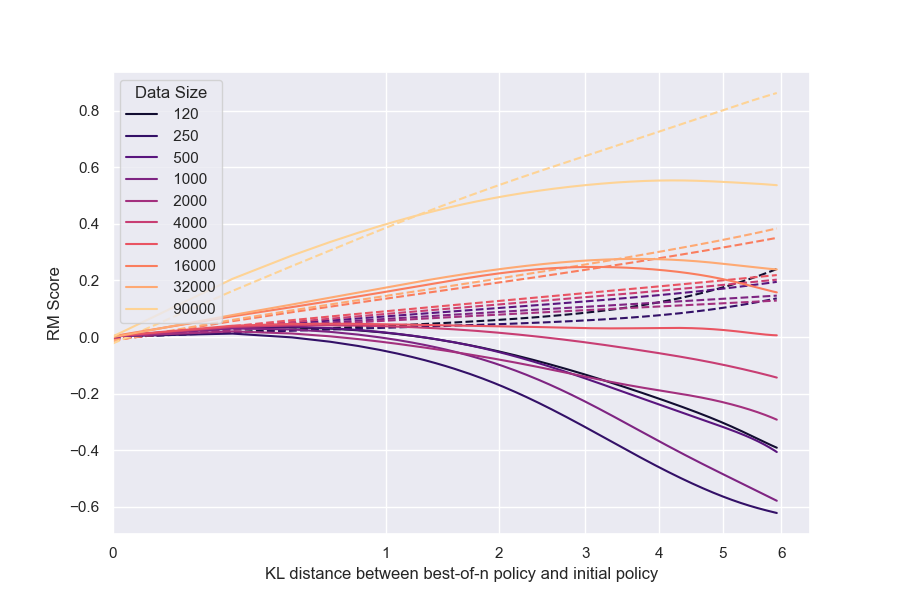}
    \caption{BoN}
    \label{fig:a_bon_datasweep}
    \end{subfigure}
    \hfill
    \begin{subfigure}[h]{0.49\linewidth}
    \centering
    \includegraphics[width=\linewidth]{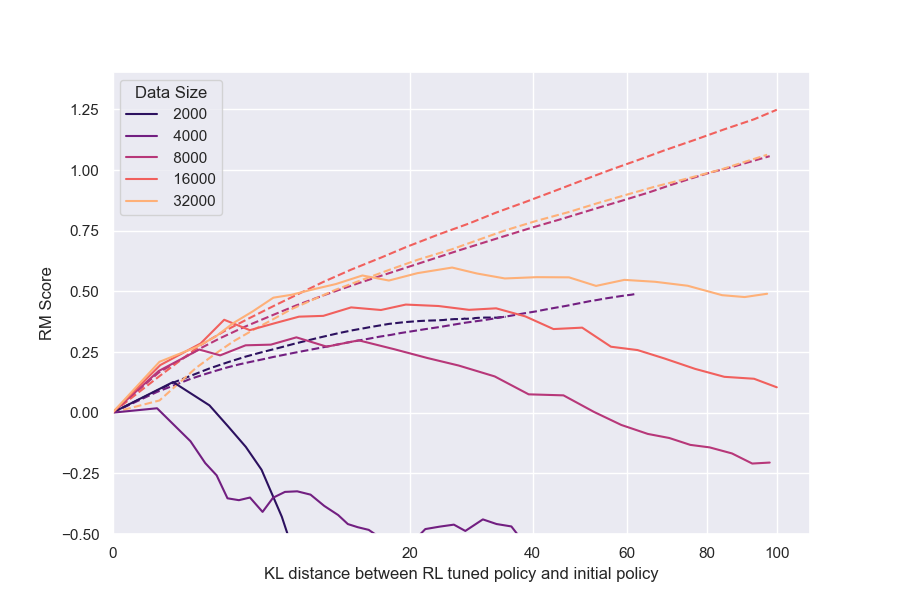}
    \caption{RL} 
    \label{fig:a_rl_datasweep}
    \end{subfigure}
    \caption{RM data scaling experiments. RM size is held constant (12M), while RM data is varied. The x-axis has a square root scale.  Note that the plots have different axes. Dotted lines indicate proxy rewards, solid lines indicate gold rewards.}
    \label{fig:datasweep}
\end{figure}

For all RM sizes, we observe that for amounts of data less than around 2,000 comparisons\footnote{To test the hypothesis that some minimum number of RM finetuning steps is needed, we control for the number of SGD steps by running multiple epochs and observe that running 4 epochs instead of 1 yields no change in gold score whatsoever, whereas 1 epoch of 4 times as much data performs substantially better (\cref{fig:a_epochs_control}).}, there is very little improvement over near-chance loss (\Cref{fig:a_bon_datasweep_rmloss}). This is also reflected in gold scores after optimization (\cref{fig:a_bon_datasweep_final}). After this threshold, all models improve with more data, though larger RMs generally improve faster. Interestingly, although larger RMs result in better gold scores overall, they do not appear to have this critical threshold substantially earlier than smaller models.\footnote{This result contradicts some other internal findings; thus, it is possible that this is an artifact of this particular setup.}

We hypothesized that two RMs of equal validation loss would achieve the same robustness against optimization, regardless of the combination of RM size and RM data size. Our results provide some weak evidence for this hypothesis (\cref{fig:a_bon_datasweep_final_rmloss}).

\begin{figure}[h]
    \centering
    \begin{minipage}{0.49\linewidth}
        \includegraphics[width=\linewidth]{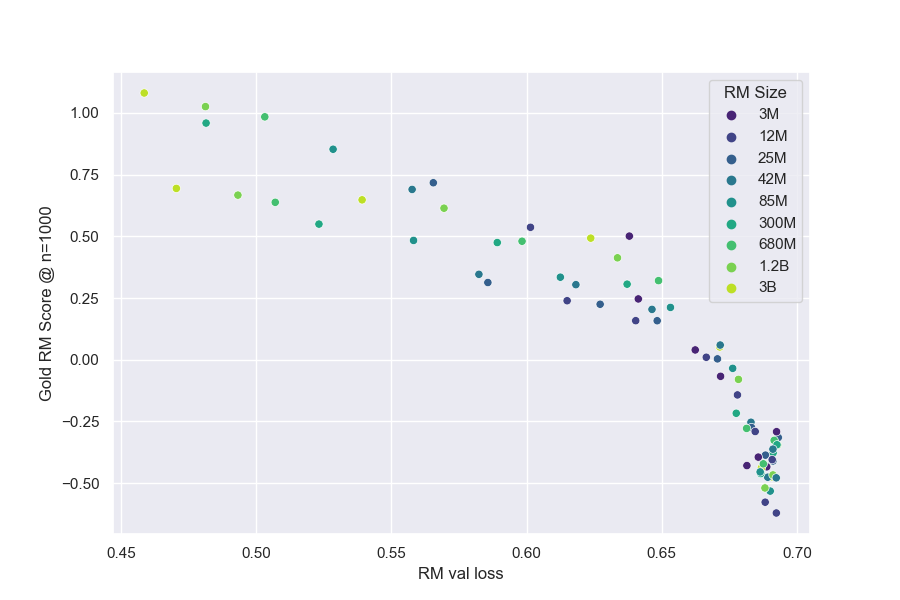}
        \caption{RM validation loss vs BoN RM score @ n=1000. Most points in this figure are already averaged over multiple seeds.}
        \label{fig:a_bon_datasweep_final_rmloss}
    \end{minipage}
    \hfill
    \begin{minipage}{0.49\linewidth}
        \includegraphics[width=\linewidth]{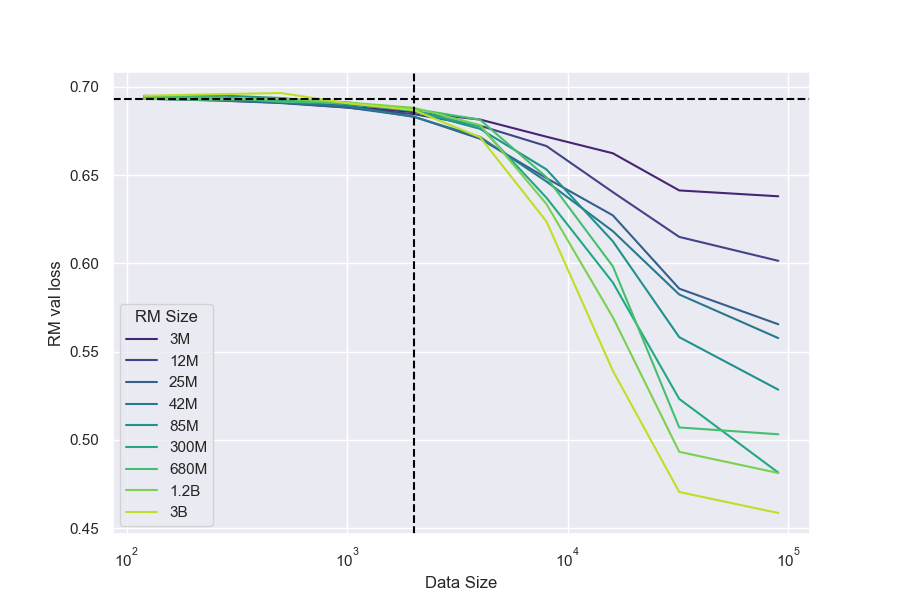}
        \caption{RM losses, broken down by data size and RM size}
        \label{fig:a_bon_datasweep_rmloss}
    \end{minipage}
\end{figure}

\subsection{Scaling with Policy Size}\label{policysweep}

\begin{figure}[t]
    \centering
    \begin{subfigure}{0.49\linewidth}
    \includegraphics[width=\linewidth]{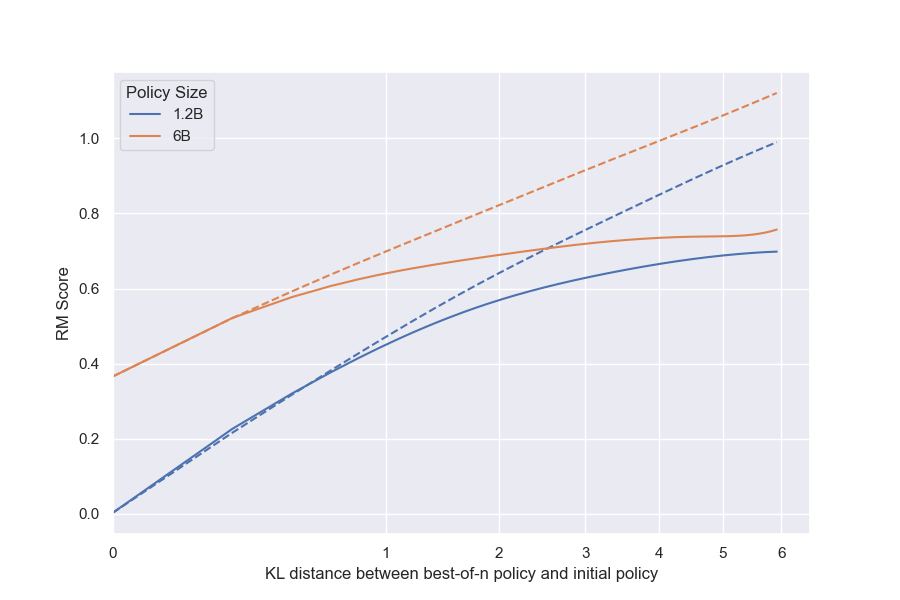}
    \caption{BoN}
    \label{fig:a_bon_policy_6b}
    \end{subfigure}
    \begin{subfigure}{0.49\linewidth}
    \centering
    \includegraphics[width=\linewidth]{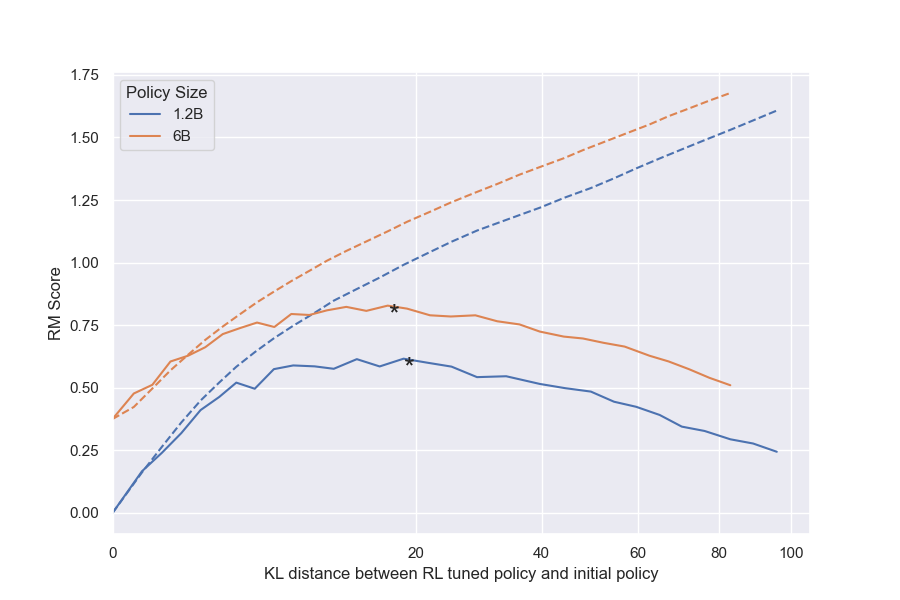}
    \caption{RL}
    \label{fig:a_rl_policy_6b}
    \end{subfigure}
    \caption{Policy scaling experiments. RM size is held constant (12M), while policy size is varied. The x-axis has a square root scale. Note that the plots have different axes. Dotted lines indicate proxy rewards, solid lines indicate gold rewards. The asterisks in the RL plot indicate the max gold score for each policy size.}
    \label{fig:policysweep}
\end{figure}

We briefly explore the impact of policy size by holding the RM size constant (12M) and evaluating two different policy sizes. We also perform the same experiment with a different RM size (3B), observing similar results (\cref{fig:a_rl_policy_6b_3brm}). 

\paragraph{Larger policies see less benefit from optimization against an RM, but don't overoptimize more.} We observe that the 6B policy run has a smaller difference between its initial and peak gold reward model scores than the 1.2B policy run. This is most visible in the BoN plot (\cref{fig:a_bon_policy_6b}).\footnote{For a version of the RL plot (\cref{fig:a_rl_policy_6b}) with all runs starting at 0, see \cref{fig:a_rl_policy_6b_from0}.} However, while we might expect that a larger policy overoptimizes substantially faster, contrary to intuition, we find that both gold scores peak at almost the same KL. In fact, the gap between the proxy and gold scores is almost the same between the two policy sizes (\cref{fig:a_rl_policy_6b_gap}). We can interpret this gap, the shortfall between the predicted and actual rewards, as being indicative of the extent to which the proxy RM is exploited. We discuss this result further in \cref{policysweep_implications}.

\subsection{RL vs BoN}\label{rl_vs_bon}

\textit{A priori}, we might expect reinforcement learning via PPO \citep{schulman2017proximal} and best-of-n to apply optimization in very different ways. As such, we ask whether this difference in optimization results in different overoptimization characteristics. Similarities would potentially indicate candidates for further study in gaining a more fundamental understanding of overoptimization in general, and differences opportunities for better optimization algorithms. We note the following:

\paragraph{RL is far less KL-efficient than BoN.} Viewing KL distance as a resource to be spent, we observe that RL "consumes" far more KL than BoN. This means that {both} optimization and overoptimization require more KL to occur with RL. Intuitively, BoN searches very locally around the initial policy, and thus $\text{KL}_{\text{bo}n}$ increases with roughly $\log(n)$. For RL on the other hand, each step modifies the policy from the policy of the previous step---KL increases approximately quadratically with step in the absence of KL penalty (\Cref{fig:a_rl_rmsweep_step_kl}, \Cref{fig:kl_sweep_kl}). An implication of this result is that KL distance is an inadequate metric for quantity of (over)optimization; we discuss this further in \cref{kl_measure}. 

\paragraph{When looking at proxy vs gold RM scores, BoN and RL look more similar.} The proxy RM score is another possible metric for quantity of optimization, because it is the value that is being directly optimized for. Using it as the metric of optimization leads to significantly more analogy between RL and BoN than KL distance does. However, we do observe that RL initially has a larger proxy-gold gap (i.e requires more proxy RM increase to match BoN), but then peaks at a higher gold RM score than BoN (\cref{fig:a_train_rm_vs_gold_rm}).

\begin{figure}[h]
    \centering
    \includegraphics[width=0.8\linewidth]{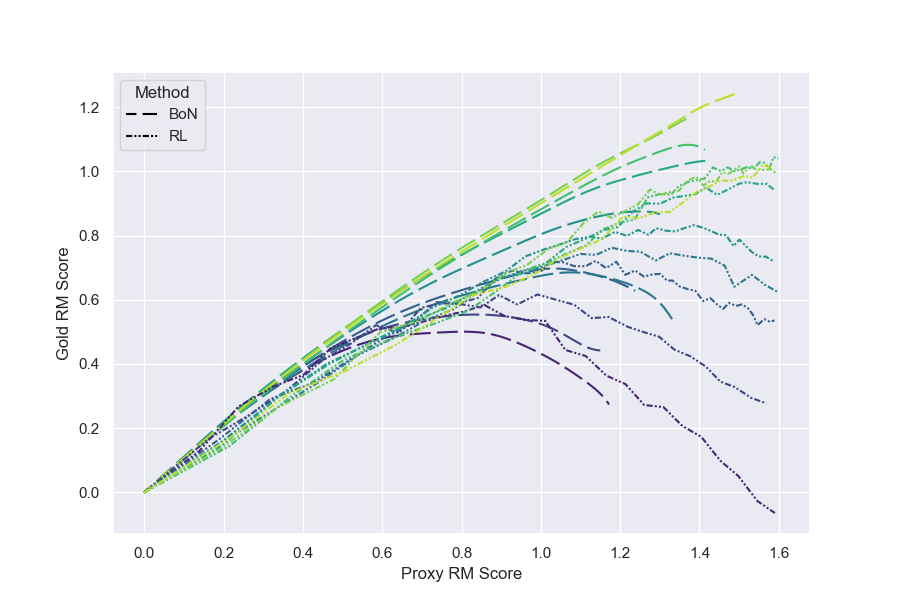}
    \caption{Proxy vs gold RM score for both BoN and RL. RL curves are truncated to a proxy RM score of 1.6 for readability.}
    \label{fig:a_train_rm_vs_gold_rm}
\end{figure}

\subsection{Effect of KL Penalty}\label{kl_sweep_sec}

We observe in our setting that when varying the KL penalty for RL, the gold RM scores depend only on the KL distance of the policy $\text{KL}_{\text{RL}}$ (\Cref{fig:kl_sweep}). The KL penalty only causes the gold RM score to converge earlier, but does not affect the $\text{KL}_{\text{RL}}$-gold reward frontier, and so the effect of the penalty on the gold score is akin to early stopping (\Cref{fig:kl_sweep_kl}). However, we have seen some evidence that this result could be particularly sensitive to hyperparameters. 

Because we observe that using KL penalty has a strictly larger proxy-gold gap, we set KL penalty to 0 for all other RL experiments in this paper.

It is important to note that PPO's surrogate objective incorporates an implicit penalty on $D_{\text{KL}}\left(\pi_{\text{old}}\parallel\pi\right)$, where $\pi_{\text{old}}$ is a recent policy (not the initial policy) \citep{schulman2017proximal}. This penalty is used to control how fast the policy changes, but also has an indirect effect on the KL we study here, $D_{\text{KL}}\left(\pi\parallel\pi_{\text{init}}\right)$, causing it to grow much more slowly (providing the implementation is well-tuned). We do not know why this indirect effect appears to lead to less overoptimization than an explicit KL penalty.

\begin{figure}[h]
    \centering
    \includegraphics[width=0.8\textwidth]{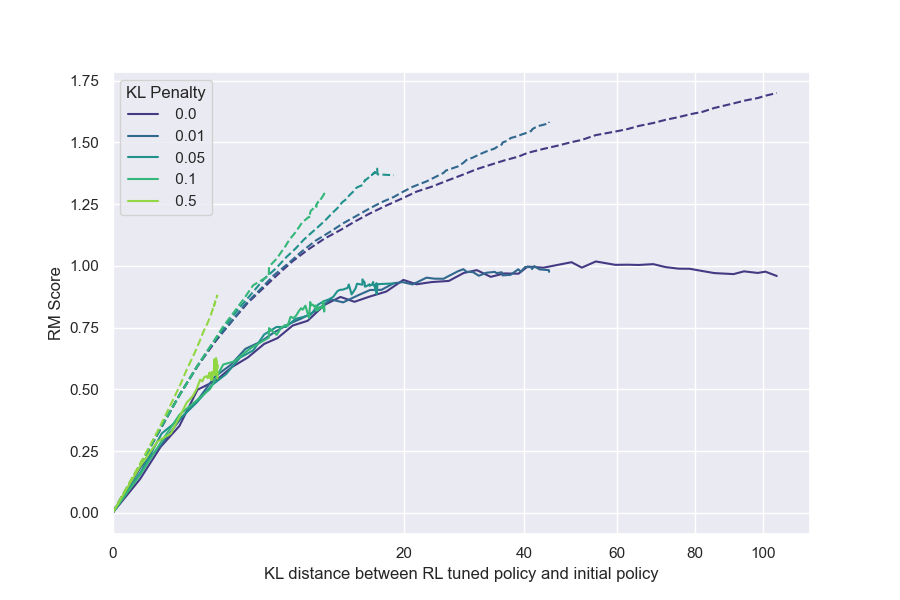}
    \caption{RL experiments with various KL penalties. Policy size (1.2B) and RM size (1.2B) are held constant. Dotted lines indicate proxy rewards, solid lines indicate gold rewards. We observe the effect of the KL penalty on the gold score as being equivalent to early stopping.}
    \label{fig:kl_sweep}
\end{figure}

\section{Discussion}

\subsection{KL as a measure of amount of optimization}\label{kl_measure}

For any given fixed optimization method, KL yields clean scaling trends, such as the ones observed in \cref{scaling_rm}, and consistent peak gold RM score KLs as in \cref{policysweep}. However, because it's clear that different methods of optimization spend KL very differently (\cref{rl_vs_bon}), it should not be used to compare the amount of optimization between different optimization algorithms. There exist pertubations to a policy that are orthogonal to the reward signal that would result in increases in KL that do not increase either gold or proxy reward; conversely, extremely small but well targeted perturbations could substantially change the behavior of the policy within a small KL budget.

\subsection{Relation to Goodhart Taxonomy}\label{implications_optim}

One useful taxonomy for various Goodhart effects is presented in \citet{manheim2018categorizing}, categorizing Goodhart's Law into 4 categories: Regressional, Extremal, Causal, and Adversarial. In this section, we discuss our results in the framework of this taxonomy.

\subsubsection{Regressional Goodhart}\label{regressional_goodharting}

Regressional Goodhart occurs when our proxy RMs depend on features with noise. The simplest toy example of this is a proxy reward $\hat X$ which is exactly equal to the gold reward $X$ plus some independent noise $Z$. When optimizing against this proxy, some amount of optimization power will go to selecting for noise, leading to a gold reward less than predicted by the proxy. 

More formally, for independent absolutely continuous random variables $X$ and $Z$ with $X$ normally distributed and either
(a) $Z$ normally distributed or (b) $\left|Z-\mathbb E\left[Z\right]\right|<\delta$ for some $\delta>0$, this model predicts a gold reward that is:

\begin{equation}\label{eq:reg_goodhart}
    \mathbb E[X \mid \hat X = \hat x] = \mathbb E[X] + \left(\hat x - \mathbb E[X] - \mathbb E[Z]\right) \frac{\mathrm{Var}(X)}{\mathrm{Var}(X) + \mathrm{Var}(Z)}+\varepsilon
\end{equation}

where $\varepsilon=0$ in case (a) and $\varepsilon=o\left(\mathrm{Var}\left(Z\right)\right)$ as $\delta\to 0$ in case (b). See \cref{reg_goodhart_proof} for the proof.

Intuitively, we can interpret \cref{eq:reg_goodhart} as stating that the optimization power expended is divided between optimizing the gold reward and selecting on the noise    proportional to their variances. This also implies that if this is the only kind of Goodhart present, the gold reward must always increase monotonically with the proxy reward; as we observe nonmonotonic behavior (\cref{fig:a_train_rm_vs_gold_rm}), there must be either noise distributions violating these assumptions or other kinds of Goodhart at play.

This result lends itself to an interpretation of the $\alpha$ term in the RL and BoN gold score scaling laws: since for both RL and BoN the proxy scores are roughly linear in \skl, the difference in the slope of the proxy score and the linear component of the gold score (i.e the $\alpha$ term) can be interpreted as the amount of regressional Goodhart occurring. 

\subsubsection{Extremal Goodhart}\label{extremal_goodhart}

We can think of out of distribution failures of the RM as an instance of extremal Goodhart. As we optimize against the proxy RM, the distribution of our samples shifts out of the training distribution of the RM, and thus the relation between the proxy and gold scores weakens. For instance, suppose in the training distribution a feature like answer length always indicates a higher quality answer, and thus the proxy RM infers that longer answers are always better, even though at some point outside the training distribution, selecting on longer answers no longer improves quality.\footnote{Optimized policies producing very long answers even when a short answer would be preferred is a real issue that we have observed in other experiments in the InstructGPT setting.}

We can also think of this as the proxy failing to depend on relevant features; this failure bears resemblance to the setting considered in \citet{zhuang2020consequences}, where a failure of the proxy to consider all features, under certain conditions, leads to overoptimization with unbounded loss of utility regardless of optimization method.

We expect extremal Goodharting to be primarily responsible for the nonmonotonicity of the gold RM scores in this paper, and is mostly responsible for the $\beta$ term, which in the limit of optimization, results in an unbounded loss of utility. This lends a natural interpretation to the smooth decrease in $\beta$ for both BoN and RL with increased RM size as smooth improvements in model robustness
(\cref{fig:rmsweep_fit}).

\subsubsection{Causal Goodhart}

We can think of causal Goodhart as being a generalization of regressional Goodhart: there may exist correlations between features and gold score where the causal structure of the problem is such that selecting on the feature does not increase the gold score. For instance, suppose answer length is correlated with quality due to some other common cause (say, informativeness); then, the proxy RM may learn to use answer length as a feature, and when we select against the proxy we get longer answers that do not increase on actual quality.\footnote{We can think of noise as a particular case of this where the independent noise is correlated with signal+noise, but of course there is no causal relation between signal and noise.} In our experiments, we would observe causal Goodhart as behaving similarly to regressional Goodhart. 

\subsubsection{Adversarial Goodhart}

Adversarial Goodhart occurs when the policy actively manipulates the proxy. We do not expect the effects of adversarial Goodhart to be captured in this work, as the models involved are not powerful enough to implement adversarial strategies. However, given the constant improvement of ML capabilities, it is entirely plausible that ML systems will one day become capable enough to do so \citep{hubinger2019rlo}. When this occurs, the scaling laws observed in this paper may break down. Thus, we advise caution when using these results for extrapolation.

    \subsection{Implications for iterated RLHF}\label{iterated_rlhf}

When conducting reinforcement learning from human feedback, it is preferable to use an online setup, in which fresh human feedback data is periodically used to train a new reward model, to mitigate overoptimization \citep{anthropicrlhf1}. Our scaling law allows us to analyze the effect of this iterative approach under some simplifying assumptions. We assume firstly that the scaling coefficients $\alpha_{\text{RL}}$ and $\beta_{\text{RL}}$ remain constant across iterations, and secondly that the distance $d=\sqrt{\text{KL}}$ is additive across iterations (because of how KL appears to grow empirically as in Figure \ref{fig:kl_sweep_kl}). Under these assumptions, the final gold reward model score after $k$ iterations each covering a distance $d/k$ is given by
\[R_{\text{RL}}\left(d\right) = {d}\left(\alpha_{\text{RL}}-\beta_{\text{RL}}\log\left({d}\right)+\beta_{\text{RL}}\log\left(k\right)\right).\]

Two interesting observations follow from this. Firstly, the iterative approach does not affect any Goodharting captured by the $\alpha_{\text{RL}}$ term (such as regressional Goodharting, as discussed in Section \ref{regressional_goodharting}). Secondly, the effect of the iterative approach is to increase the final gold RM score by an amount proportional to both $d$ and $\log\left(k\right)$, namely
\[\beta_{\text{RL}}d\log\left(k\right).\]
Note that this result can only hold up to some maximum value of $k$, and we expect our scaling law to break down below some minimum distance. Further research is required to determine what this minimum is, as well as to what extent our simplifying assumptions hold in practice.

\subsection{Policy size independence}\label{policysweep_implications}

Our observation that larger SFT policies seem to exhibit the same amount of overoptimization during RL implies that larger policies do not increase the amount of optimization power applied to the RM or learn faster, even though they start out with higher performance on the gold score. While it is expected that larger policies have less to gain from optimizing against the same RM, we might also expect the gold score to peak at a substantially earlier KL distance, analogous to what we see when we scale the RM size (\cref{scaling_rm}), or for larger policies to more efficiently utilize the same number of RL feedback steps (\cref{data_scaling})\footnote{It is also not the case that the 6B policy run has higher KL distance for the same number of RL steps; in fact, we observe that it has \textit{lower} KL distance for the same number of steps (\cref{fig:a_rl_policy_6b_step_kl})}.

One possible hypothesis is that, because RLHF can be viewed as Bayesian inference from the prior of the initial policy \citep{korbak2022rl}\footnote{The result of \citet{korbak2022rl} concerns varying KL penalties rather than KL distances with no KL penalty, but as we observe in \cref{kl_sweep_sec}, this is equivalent on our setting.}, increases in policy size are only improving the modelling accuracy of the human demonstration distribution.

\subsection{Limitations and Future Work}\label{limitations}

In addition to the overoptimization studied in this paper (due to the mismatch between the reward model and the ground truth labels), there exists another source of overoptimization due to mismatch between the ground truth labels and the actual human intent. This contains issues ranging from the mundane, such as labellers choosing options that only \textit{appear} to match their intent\footnote{For instance, the example of a robotic hand learning from human feedback to only \textit{appear} to grasp a ball, presented in \url{https://openai.com/blog/deep-reinforcement-learning-from-human-preferences/} \citep{christiano2017deep}}, to substantially more philosophically fraught issues \citep{NEURIPS2018_d89a66c7,sunstein2001predictably}. The main limitation of this work is that this additional source of overoptimization is not captured in the setting of this paper. See \cref{related_work} for discussion of related work in alignment.

Some additional limitations and future directions include:

\begin{itemize}
    \item \textbf{Validating these results on other environments and experimental setups.} While the experiments in this paper all use the InstructGPT environment, the main value of these results lies in the extent to which they reflect general phenomema. Confirming whether these results generalize to other settings would be extremely valuable to that end.\footnote{In the course of our experiments, we observed visually similar results on the WebGPT environment \citep{webgpt}.} 
    \item \textbf{Validating the synthetic setting.} The synthetic setting might not transfer to real world settings, for instance because there is substantial correlation between RMs. 
    \item \textbf{Investigating methods for making RMs more robust to optimization.} While there has been prior work in this direction (see \cref{related_work}), there is still much work to be done in systematically investigating ways to make RMs more robust.
    \item \textbf{Exploring other forms of optimization and categorizing their differences.} While this work focuses exclusively on BoN and RL there are other ways of applying optimization pressure against a model of a reward signal, either implicit or explicit. This includes GeDi-like steering, Decision Transformers\footnote{One could consider measuring the actual achieved ground truth/gold score achieved for each "proxy" score conditioned on, a la \cref{fig:a_train_rm_vs_gold_rm}, as testing the implicit reward-behavior mapping encoded by the model.}, variants of BoN like beam search, and other RL algorithms.
    \item \textbf{Better understanding the functional form of proxy RM scores.} In our modeling, we find that the proxy RM scores are more difficult to predict for both BoN and RL (\cref{scaling_rm}). While they seem to have a major linear component, there is sufficient variation that fitting a linear regression is not very good at predicting extrapolated proxy RM scores.
    \item \textbf{Exploring adversarial Goodhart empirically.} In this work we deal with systems not powerful enough to cause adversarial Goodhart. However, it is plausible that adversarial Goodhart is especially important, or is associated with phase changes that break the trends seen in this paper.
    \item \textbf{Exploring scaling with policy size in more detail.} Our exploration of policy size scaling in this paper was limited to only two policy sizes. It is possible that there exist trends not seen in our exploration when considering the policy size more carefully.
    \item \textbf{Exploring multi-iteration RLHF.} In particular, checking for deviations from the assumptions of \cref{iterated_rlhf}.
\end{itemize}

We hope this paper leads to future work further bridging conceptual and empirical alignment research.

\section{Related Work}\label{related_work}

Goodhart's Law in its modern formulation was first introduced in \citet{hoskin1996}, with many of the key ideas introduced in prior works \citep{campbell1969reforms,goodhart1975problems}. Many approaches have been proposed for reducing overoptimization in general \citep{taylor2016quantilizers,everitt2017reinforcement}, as well as in RMs \citep{gleave2022uncertainty}, including within the field of adversarial robustness \citep{chakraborty2018adversarial}. Overoptimization of reward models can be viewed as a special case of specification gaming (also known as reward hacking). Previous work has shown numerous examples of such behavior in a wide variety of settings \citep{specification-gaming,lehman2020surprising}. \citet{pan2022effects} explores a diverse set of RL environments and finds phase transitions in some settings.  A number of works have proposed theoretical models of Goodhart's Law and reward hacking \mbox{\citep{classifying-specification,manheim2018categorizing,https://doi.org/10.48550/arxiv.2209.13085}}, including \citet{zhuang2020consequences} which exhibits very similar overoptimization curves as observed in this paper in some toy environments. 

One can think of overfitting as a special case of Goodhart's law where the proxy is the score on some finite set of samples, whereas our actual objective includes its generalization properties as well. Overfitting has been observed and studied in RL settings \citep{zhang2018dissection,zhang2018study,farebrother2018generalization,pmlr-v97-cobbe19a}. \citet{song2019observational} studies "observational overfitting" in RL settings, which is closely related to causal Goodhart \citep{manheim2018categorizing}.

Adversarial attacks and robustness are also very closely related fields. Many works have demonstrated the existence of adversarial examples in all kinds of neural networks \citep{szegedy2013intriguing,https://doi.org/10.48550/arxiv.1703.06748,ebrahimi2018adversarial,pmlr-v80-dai18b}, and proposed methods to measure and increase neural network robustness \citep{gu2014towards,zheng2016improving,https://doi.org/10.48550/arxiv.1902.06705,https://doi.org/10.48550/arxiv.2104.13733}.

Scaling laws have seen substantial success in machine learning for predicting properties of language models \citep{kaplan2020scaling,henighan2020scaling,hernandez2021scaling} and has led to better theoretical understanding of language models \citep{sharma2020neural,bahri2021explaining}.

Reinforcement learning from human feedback \citep{christiano2017deep,ibarz2018reward} has been used broadly in language models \citep{summarization,instructgpt,webgpt,anthropicrlhf1}. It is also a first step towards recursive reward modelling \citep{leike2018scalable}, an approach towards reducing the additional source of overoptimization described in \cref{limitations}, though it is subject to some theoretical limitations \citep{elk}. We observe similar approximately-linear proxy RM scores observed in \citet{anthropicrlhf1}\footnote{Note that \citet{anthropicrlhf1} scaled the policy size with the RM size, while we hold the policy size constant.}, though we observe an early-KL bend in the proxy RM scores, and there are some occasional outliers with very small RMs and data sizes.

More broadly, AI alignment is the problem of ensuring that the goals of AI systems are aligned with the goals of humans \citep{ngo2022alignment}, including future AI systems which may exceed humans \citep{bostrom2014superintelligence}. There are a number of reasons to expect AI misalignment, especially in those more powerful future systems, to occur \citep{Omohundro2008TheBA,turner2021optimal,armstrong2013general,hubinger2019rlo,soares2015corrigibility}, and to result in catastrophic outcomes \citep{carlsmith2022power,ajeya-rlhf}.

\section*{Acknowlegements}

We thank Vivek Hebbar, Jared Kaplan, Jan Leike, Kyle McDonell, Dan Mossing, Ethan Perez, Laria Reynolds, and Jeff Wu for valuable discussion and feedback.

\bibliographystyle{plainnat}
\bibliography{references} 
\newpage
\appendix

\section{Proof of Regressional Goodhart identity}\label{reg_goodhart_proof}

\begin{lemma}
Let $X$ and $Z$ be independent absolutely continuous random variables with $X$ normally distributed and either
(a) $Z$ normally distributed or (b) $\left|Z-\mathbb E\left[Z\right]\right|<\delta$ for some $\delta>0$. Then for any real number $c$ and as $\delta\to 0$,
\[\mathbb E\left[X\mid X+Z=c\right]=\mathbb E\left[X\right]+\left(c-\mathbb E\left[X\right]-\mathbb E\left[Z\right]\right)\frac{\mathrm{Var}\left(X\right)}{\mathrm{Var}\left(X\right)+\mathrm{Var}\left(Z\right)}+\varepsilon,\]
where $\varepsilon=0$ in case (a) and $\varepsilon=o\left(\mathrm{Var}\left(Z\right)\right)$ in case (b).
\end{lemma}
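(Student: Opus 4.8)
The plan is to prove the two cases by quite different means: case (a) follows immediately from Gaussian conditioning, while case (b) requires a short second-order perturbative expansion in $\mathrm{Var}(Z)$.

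For case (a), since $X$ and $Z$ are independent Gaussians, the pair $(X, X+Z)$ is jointly Gaussian, so $\mathbb{E}[X \mid X+Z=c]$ is affine in $c$ and given by the standard Gaussian conditioning formula
\[\mathbb{E}[X \mid X+Z=c] = \mathbb{E}[X] + \frac{\mathrm{Cov}(X, X+Z)}{\mathrm{Var}(X+Z)}\bigl(c - \mathbb{E}[X+Z]\bigr).\]
Using independence, $\mathrm{Cov}(X, X+Z) = \mathrm{Var}(X)$ and $\mathrm{Var}(X+Z) = \mathrm{Var}(X) + \mathrm{Var}(Z)$, which reproduces the claimed identity exactly, with $\varepsilon = 0$. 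This is the whole of case (a).

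For case (b), I would write the conditional expectation directly through densities. Let $f$ denote the (smooth) Gaussian density of $X$, set $W := Z - \mathbb{E}[Z]$ so that $|W| < \delta$ and $\mathbb{E}[W] = 0$, and write $a := c - \mathbb{E}[Z]$. Conditioning on $X + Z = c$ via the ratio of densities and substituting $z = c - x$ gives
\[\mathbb{E}[X \mid X+Z=c] = \frac{\mathbb{E}_W[\,h(a - W)\,]}{\mathbb{E}_W[\,f(a - W)\,]}, \qquad h(x) := x f(x).\]
The idea is to Taylor-expand $h$ and $f$ in $W$ about $a$ to second order. Because $\mathbb{E}[W] = 0$ and $\mathbb{E}[W^2] = \mathrm{Var}(Z)$, the first-order terms vanish, and numerator and denominator become $h(a) + \tfrac12 \mathrm{Var}(Z)\,h''(a) + (\text{rem})$ and $f(a) + \tfrac12\mathrm{Var}(Z)\,f''(a) + (\text{rem})$. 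Dividing and expanding $(1+x)^{-1}$, the terms proportional to $a\,f''(a)/f(a)$ cancel between numerator and denominator — this cancellation is what yields a clean answer — leaving
\[\mathbb{E}[X \mid X+Z=c] = a + \mathrm{Var}(Z)\,\frac{f'(a)}{f(a)} + o\bigl(\mathrm{Var}(Z)\bigr).\]
Substituting the Gaussian log-derivative $f'(a)/f(a) = -\bigl(a - \mathbb{E}[X]\bigr)/\mathrm{Var}(X)$ and $a = c - \mathbb{E}[Z]$ gives the first-order-in-$\mathrm{Var}(Z)$ expansion of the claimed rational expression; I would finish by expanding $\mathrm{Var}(X)/(\mathrm{Var}(X) + \mathrm{Var}(Z))$ to first order and checking that it matches, so the two agree up to $o(\mathrm{Var}(Z))$, giving $\varepsilon = o(\mathrm{Var}(Z))$.

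The main obstacle is making the error control rigorous rather than formal. The Taylor remainders involve third derivatives of $f$ and $h$ evaluated in $[a-\delta, a+\delta]$; since the Gaussian density and its derivatives are bounded there, each remainder is $O(\mathbb{E}|W|^3)$. The crucial estimate is $\mathbb{E}|W|^3 \le \delta\,\mathbb{E}[W^2] = \delta\,\mathrm{Var}(Z)$, so the remainder is $O(\delta\,\mathrm{Var}(Z)) = o(\mathrm{Var}(Z))$ as $\delta \to 0$; one must also verify that the $(1+x)^{-1}$ expansion contributes only $O(\mathrm{Var}(Z)^2)$ corrections, which is immediate. Finally, keeping $c$ and $\mathbb{E}[Z]$ fixed as $\delta \to 0$ keeps $a$ fixed and $f(a)$ bounded away from zero, so there are no uniformity issues in the division.
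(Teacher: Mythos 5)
Your proposal is correct and takes essentially the same route as the paper's proof: case (a) is the identical bivariate-normal conditioning argument, and case (b) is the same density-ratio Taylor expansion, finished with the Gaussian identity $f'(a)/f(a)=-\left(a-\mathbb{E}\left[X\right]\right)/\mathrm{Var}\left(X\right)$ and the first-order expansion of $\mathrm{Var}\left(X\right)/\left(\mathrm{Var}\left(X\right)+\mathrm{Var}\left(Z\right)\right)$. The only difference is bookkeeping: the paper centers both variables and pulls the factor $c$ out of the ratio so that a first-order expansion of $f_X$ suffices, whereas you expand $h(x)=xf(x)$ to second order and rely on the cancellation of the $a\,f''(a)/f(a)$ terms (which does check out), with your $\mathbb{E}\left|W\right|^3\le\delta\,\mathrm{Var}\left(Z\right)$ bound giving the same $o\left(\mathrm{Var}\left(Z\right)\right)$ remainder control as the paper's.
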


\begin{proof}
First note that by making the substitutions $X^\prime=X-\mathbb E\left[X\right]$ and $Z^\prime=Z-\mathbb E\left[Z\right]$, we may assume without loss of generality that $\mathbb E\left[X\right]=\mathbb E\left[Z\right]=0$. Let $\mathrm{Var}\left(X\right)=\sigma^2$ and $\mathrm{Var}\left(Z\right)=\tau^2$.

In case (a), the pair $\left(X,X+Z\right)$ is bivariate normal with covariance matrix
\[
\begin{pmatrix}
\sigma^2 & \sigma^2\\
\sigma^2 & \sigma^2+\tau^2
\end{pmatrix},
\]
and the result follows by standard properties of conditional distributions of multivariate normal distributions.

In case (b), let $f_X$ and $f_Z$ be the probability density functions of $X$ and $Z$ respectively. Then
\begin{equation*}
\begin{split}
\mathbb E\left[X\mid X+Z=c\right]&=\frac{\int_{-\infty}^{\infty}\left(c-z\right)f_X\left(c-z\right)f_Z\left(z\right)\mathrm dz}{\int_{-\infty}^{\infty}f_X\left(c-z\right)f_Z\left(z\right)\mathrm dz}\\
&=c-\frac{\int_{-\delta}^{\delta}z\left(f_X\left(c\right)-f^\prime_X\left(c\right)z+o\left(z\right)\right)f_Z\left(z\right)\mathrm dz}{\int_{-\delta}^{\delta}\left(f_X\left(c\right)-f^\prime_X\left(c\right)z+o\left(z\right)\right)f_Z\left(z\right)\mathrm dz}\\
&=c-\frac{f_X\left(c\right)\mathbb E\left[Z\right]-f^\prime_X\left(c\right)\mathbb E\left[Z^2\right]+o\left(\mathbb E\left[Z^2\right]\right)}{f_X\left(c\right)-f^\prime_X\left(c\right)\mathbb E\left[Z\right]+o\left(1\right)}\\
&=c+\frac{f^\prime_X\left(c\right)}{f_X\left(c\right)}\tau^2+o\left(\tau^2\right)\\
&=c\left(1-\frac{\tau^2}{\sigma^2}\right)+o\left(\tau^2\right)\\
&=c\left(\frac{\sigma^2}{\sigma^2+\tau^2}\right)+o\left(\tau^2\right),
\end{split}
\end{equation*}
as required.
\end{proof}

\newpage
\section{RL form details}\label{rl_form_appx}

Ideally all overoptimization forms would have finite slope at the origin. We tried the following forms:

\begin{itemize}
    \item ${d}\left(\alpha_{\text{RL}}-\beta_{\text{RL}}\log\left({1 + d}\right)\right)$: Has slope $\alpha$ at the origin; however, has substantially worse extrapolation behavior. We can replace the 1 with a learned $\epsilon$ but that introduces another degree of freedom.
    \item Power laws ${d}\left(\alpha_{\text{RL}}-\beta_{\text{RL}}d^{\gamma_{\text{RL}}}\right)$: Has slope $\alpha$ at the origin; however, this adds another degree of freedom, and the best fits resulted in small values of $\gamma_{\text{RL}}$.
\end{itemize}

Note that the power law forms with small $\gamma_{\text{RL}}$ approximate the RL form that we decided on, as $\lim_{n \to \infty} n(x^{1/n} - 1) = \log x$.

\section{Hyperparameters}\label{hparams}

\begin{table}[h]
    \centering
\begin{tabular}{@{}ll@{}}
\toprule
Hyperparameter                   & Value \\ \midrule
RM Adam learning rate multiplier & 1.67e-2  \\
RM batch size                    & 64    \\
RL Adam learning rate multiplier & 4e-3  \\
RL batch size                    & 256   \\
RL PPO clipping parameter        & 0.2   \\
RL Timesteps per rollout         & 256   \\
RL minibatches per epoch         & 128   \\
RL GAE bootstrapping parameter   & 0.95  \\
\bottomrule
\end{tabular}
    \caption{Hyperparameters used throughout the experiments.}
    \label{tab:hparams}
\end{table}

\newpage

\begin{figure}[p]
    \centering
    \includegraphics[width=\textwidth]{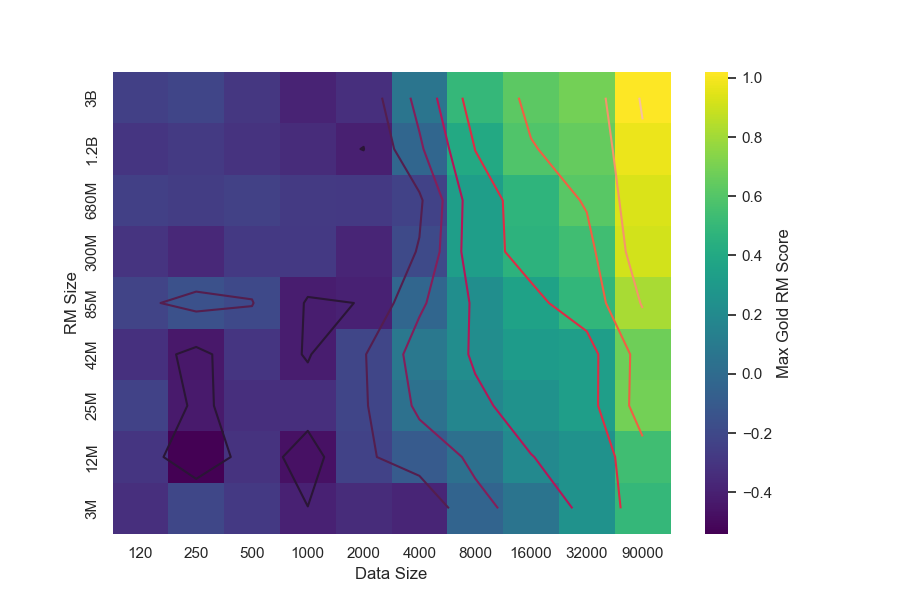}
    \caption{Maximum gold scores for all RM size and data size combinations.}
    \label{fig:a_datasweep_data_param_tradeoff}
\end{figure}

\begin{figure}[p]
    \centering
    \includegraphics[width=\textwidth]{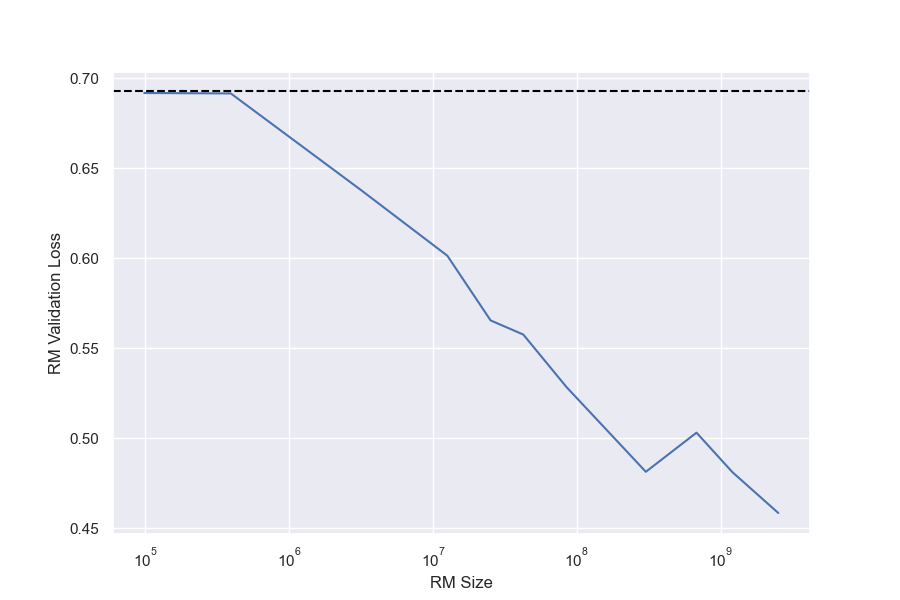}
    \caption{Validation losses for the proxy RMs in \cref{scaling_rm} by size, plus the two near-chance level RMs.}
    \label{fig:a_rm_sweep_rm_losses}
\end{figure}

\begin{figure}[p]
    \centering
    \includegraphics[width=\textwidth]{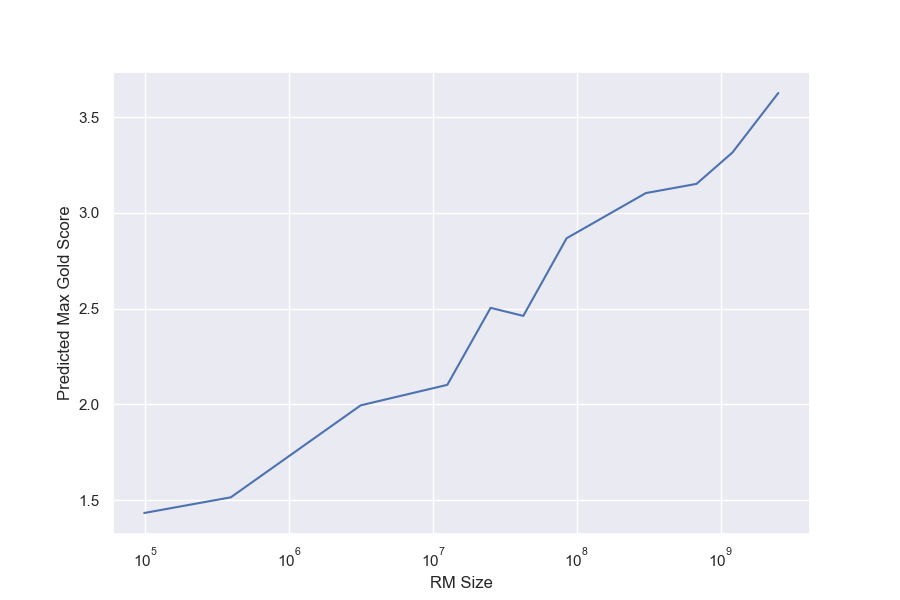}
    \caption{Max BoN gold scores ($\alpha_{\text{bo$n$}}/2\beta_{\text{bo$n$}}$) predicted with the BoN closed form}
    \label{fig:a_max_gold_score}
\end{figure}

\begin{figure}[p]
    \centering
    \includegraphics[width=\textwidth]{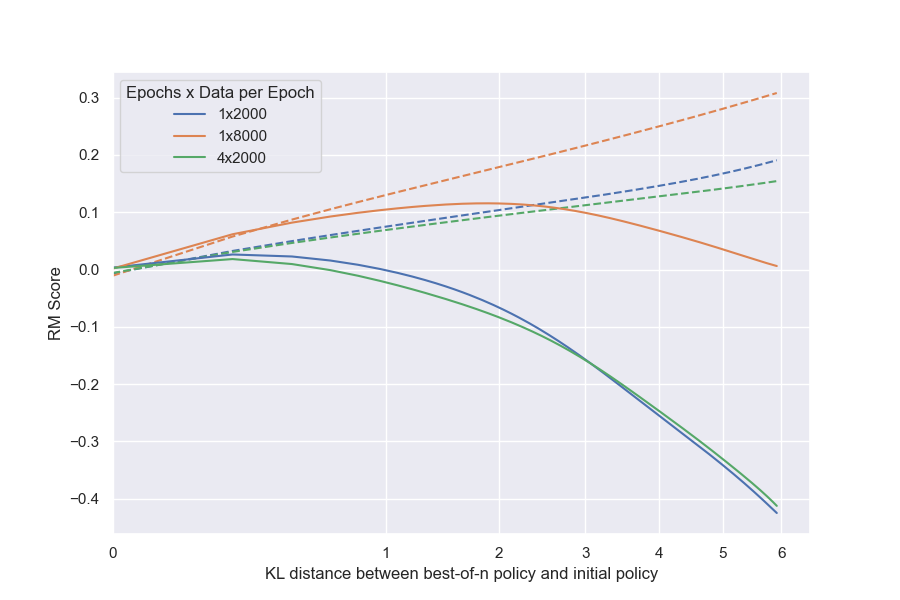}
    \caption{Total number of data points seen does not seem to affect the gold RM score much compared to the number of unique data points seen. Averaged across RM sizes. The numbers of datapoints (2000--8000) is intentionally chosen to straddle the sharp increase in performance. The validation loss of the 1x2000, 1x8000, and 4x2000 RMs are 0.686109, 0.654857, and 0.683869 respectively.}
    \label{fig:a_epochs_control}
\end{figure}

\begin{figure}[p]
    \centering
    \includegraphics[width=\textwidth]{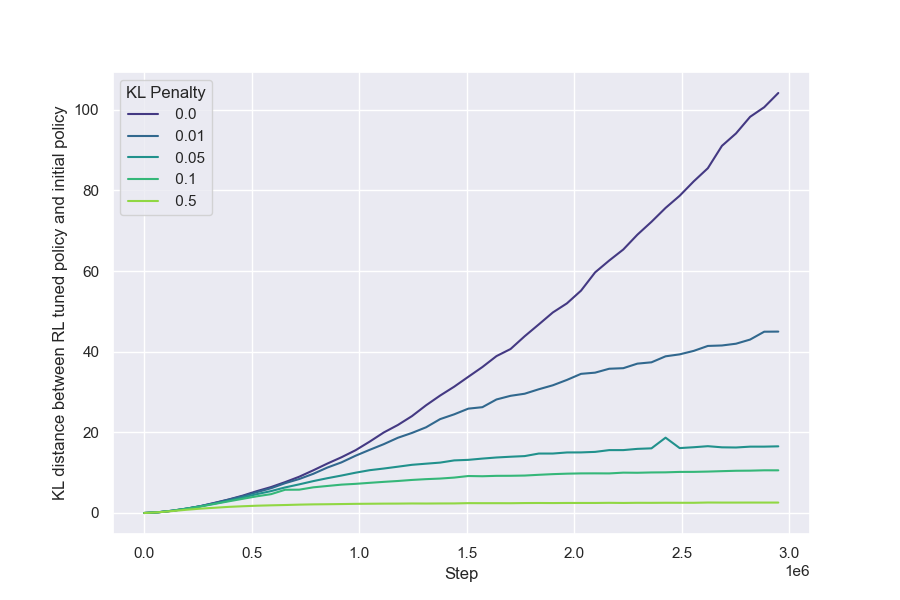}
    \caption{Change in $\text{KL}_{\text{RL}}$ throughout RL training for various different KL penalties. We observe that KL distance increases approximately monotonically with step count, and converges for higher KL penalties.}
    \label{fig:kl_sweep_kl}
\end{figure}

\begin{figure}[p]
    \centering
    \includegraphics[width=\textwidth]{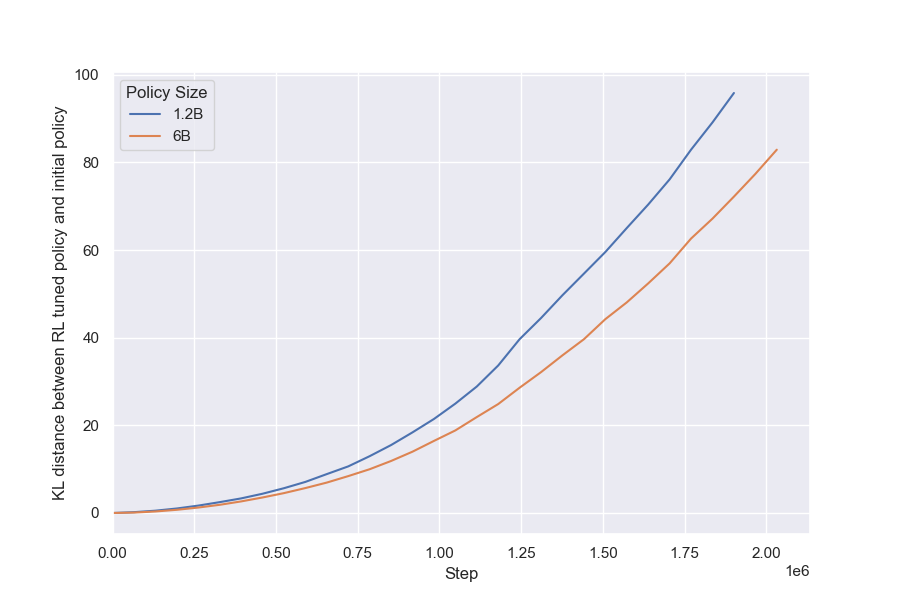}
    \caption{$\text{KL}_{\text{RL}}$ with policy size (RM size = 12M)}
    \label{fig:a_rl_policy_6b_step_kl}
\end{figure}

\begin{figure}[p]
    \centering
    \includegraphics[width=\textwidth]{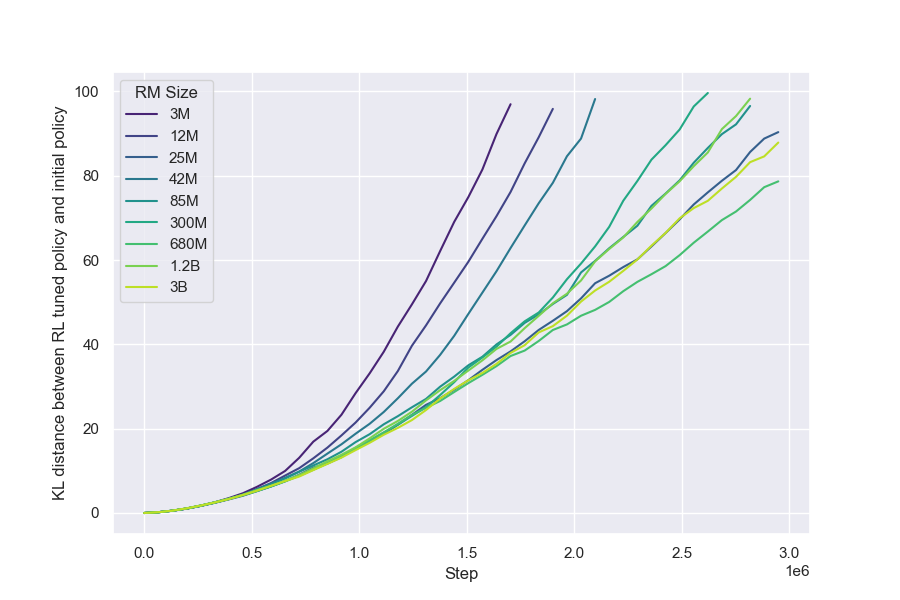}
    \caption{$\text{KL}_{\text{RL}}$ with RM size}
    \label{fig:a_rl_rmsweep_step_kl}
\end{figure}

\begin{figure}[p]
    \centering
    \includegraphics[width=\textwidth]{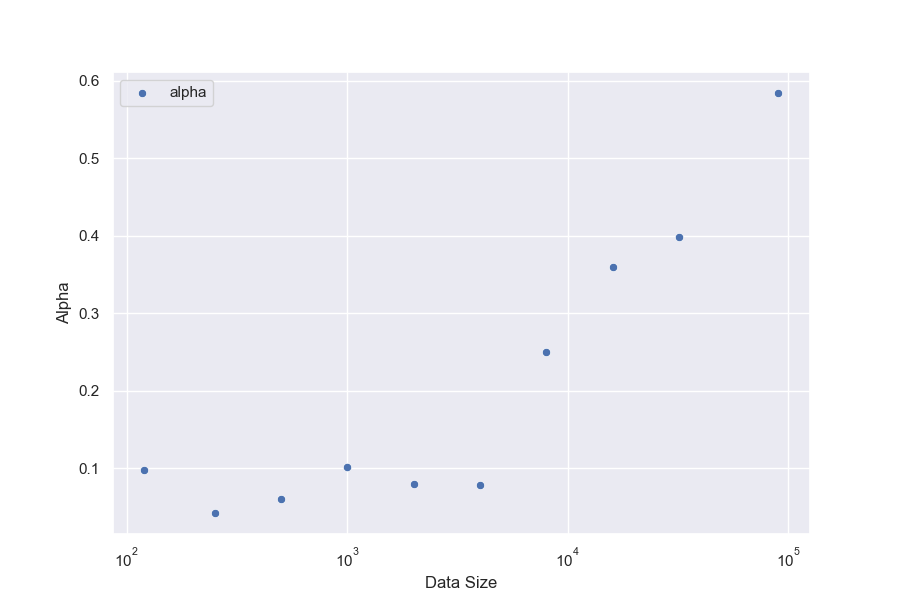}
    \caption{$\alpha_{\text{bo}n}$ with dataset size, averaged across RM sizes}
    \label{fig:a_bon_datasweep_alpha}
\end{figure}

\begin{figure}[p]
    \centering
    \includegraphics[width=\textwidth]{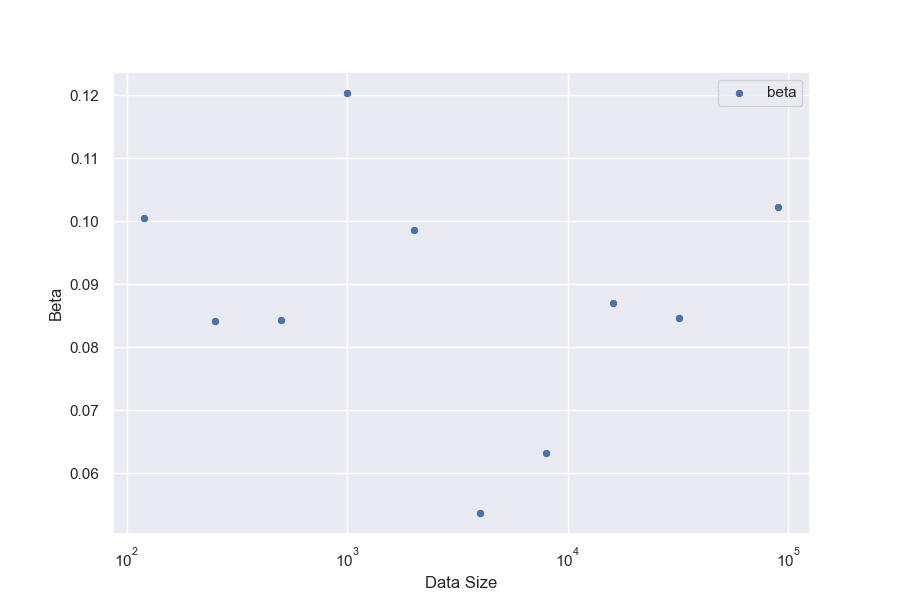}
    \caption{$\beta_{\text{bo}n}$ with dataset size, averaged across RM sizes}
    \label{fig:a_bon_datasweep_beta}
\end{figure}

\begin{figure}[p]
    \centering
    \includegraphics[width=\textwidth]{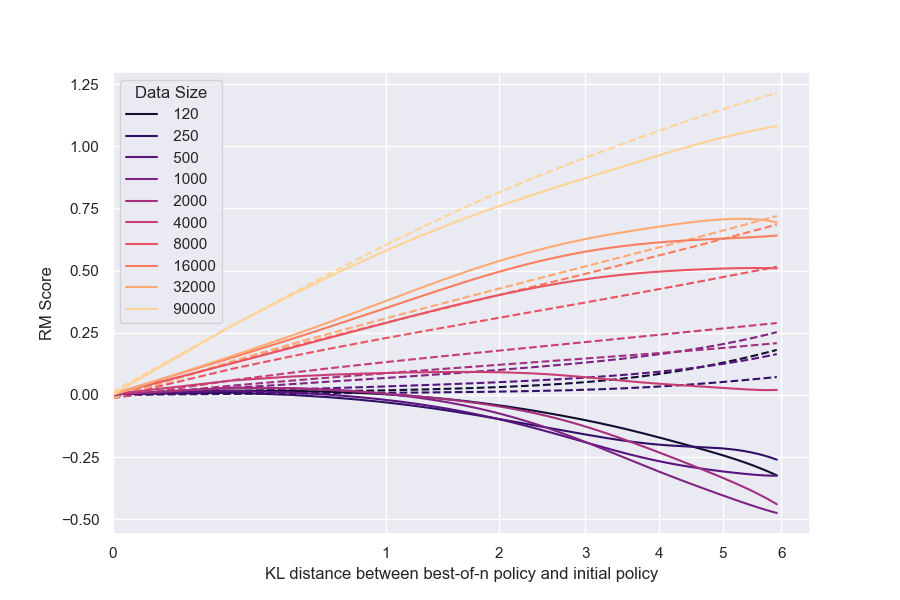}
    \caption{RM data scaling experiments, BoN, RM size=3B}
    \label{fig:a_bon_datasweep_3brm}
\end{figure}

\begin{figure}[p]
    \centering
    \includegraphics[width=\textwidth]{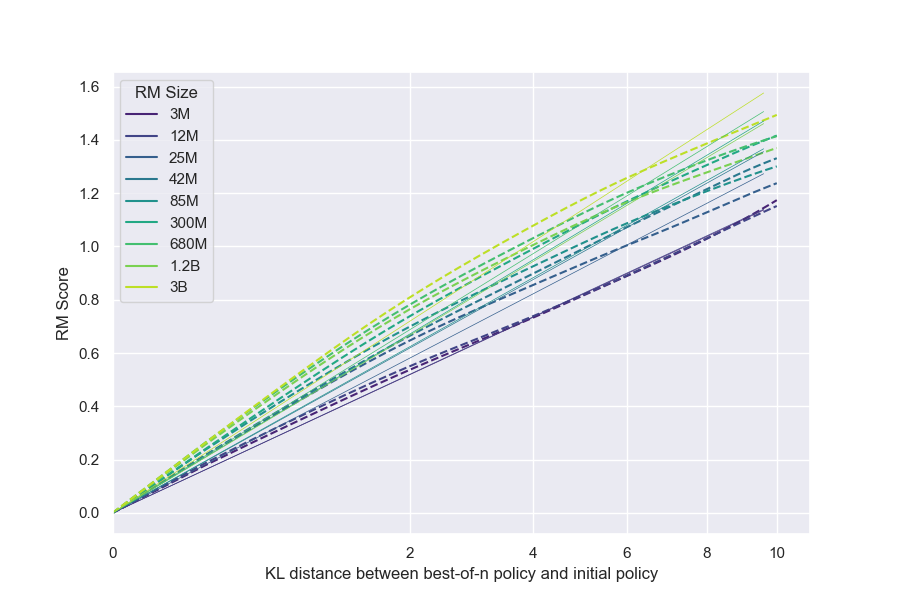}
    \caption{The BoN proxy scores are slightly concave, so that a linear fit does not fit well.}
    \label{fig:a_bon_rmsweep_proxy_fit}
\end{figure}

\begin{figure}[p]
    \centering
    \includegraphics[width=\textwidth]{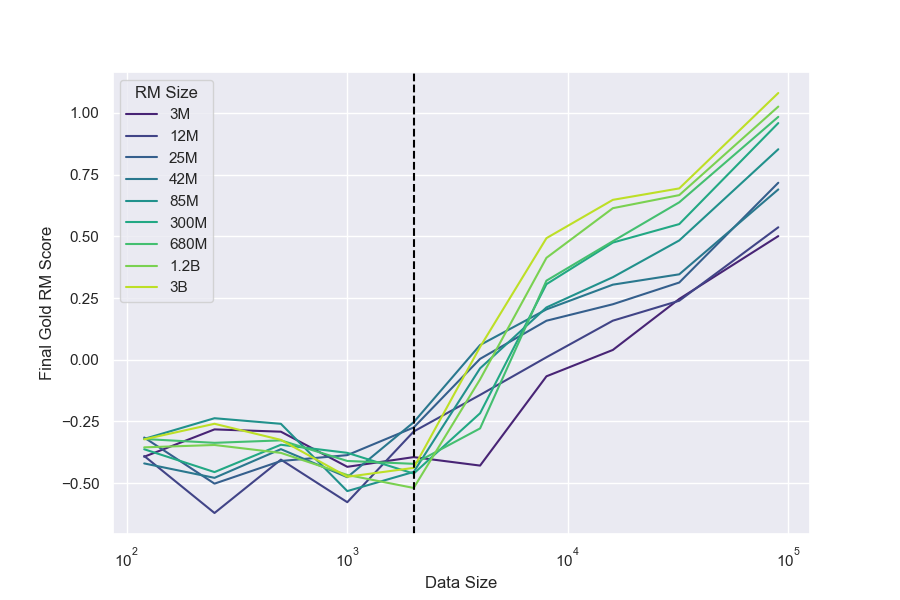}
    \caption{BoN Gold scores at n=1,000, broken down by data size and RM size. See \cref{fig:a_bon_datasweep_rmloss} for RM losses. Vertical dotted line approximately indicates first better-than-random data size.}
    \label{fig:a_bon_datasweep_final}
\end{figure}

\begin{figure}[p]
    \centering
    \includegraphics[width=\textwidth]{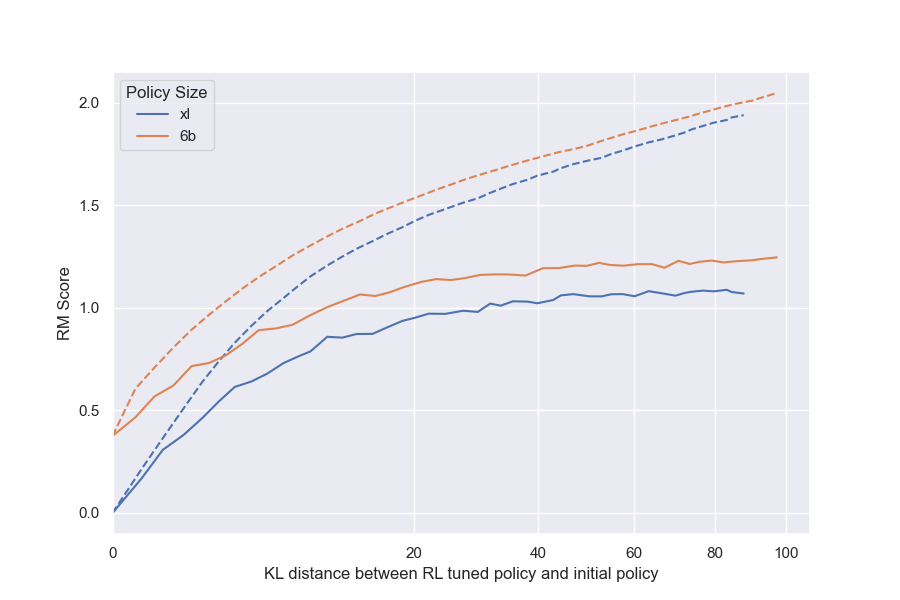}
    \caption{RL experiments with 3B RM and different policy sizes.}
    \label{fig:a_rl_policy_6b_3brm}
\end{figure}

\begin{figure}[p]
    \centering
    \includegraphics[width=\textwidth]{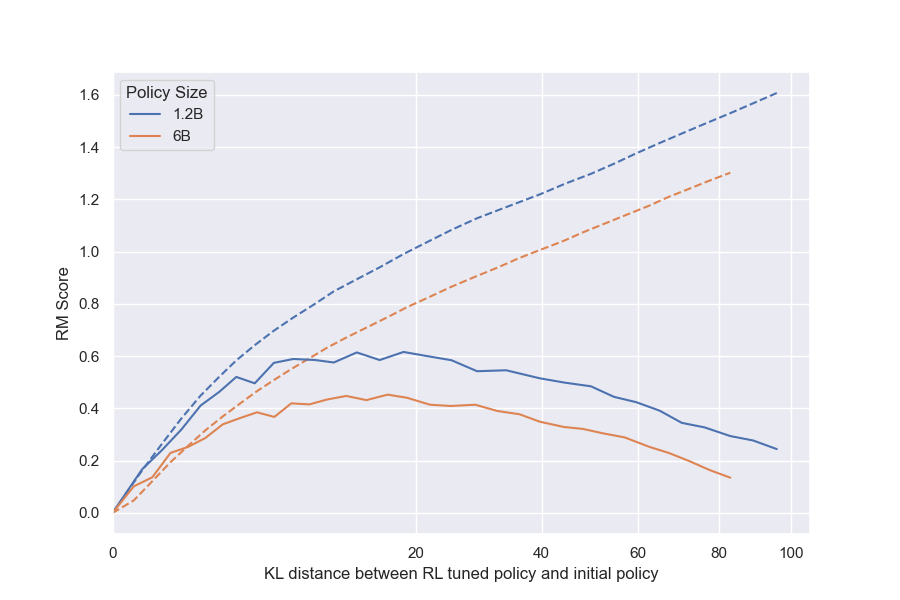}
    \caption{\cref{fig:a_rl_policy_6b} with all runs normalized from 0.}
    \label{fig:a_rl_policy_6b_from0}
\end{figure}

\begin{figure}[p]
    \centering
    \includegraphics[width=\textwidth]{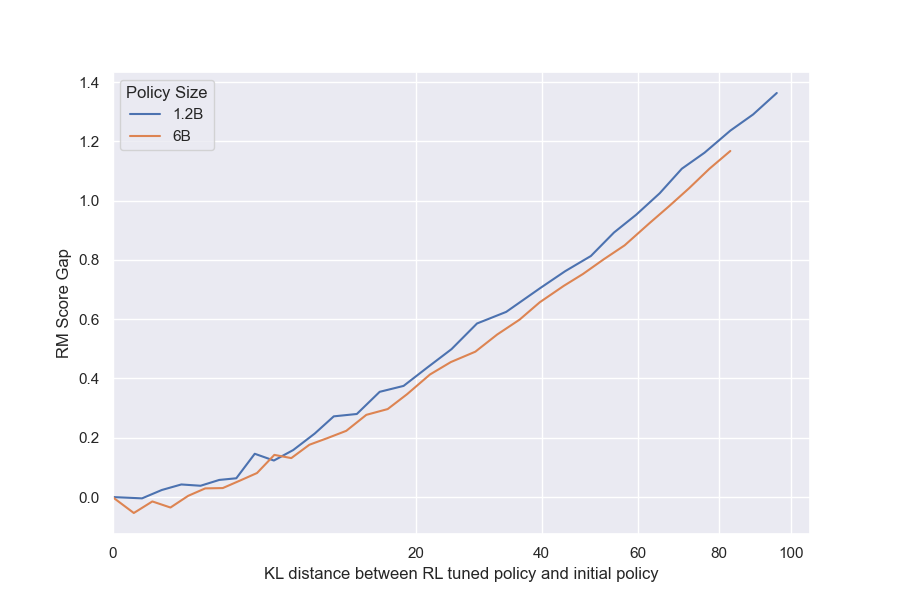}
    \caption{The gap between the proxy and gold scores in the RL policy sweep (\cref{fig:a_rl_policy_6b_gap}).}
    \label{fig:a_rl_policy_6b_gap}
\end{figure}

\begin{figure}[p]
    \centering
    \includegraphics[width=\textwidth]{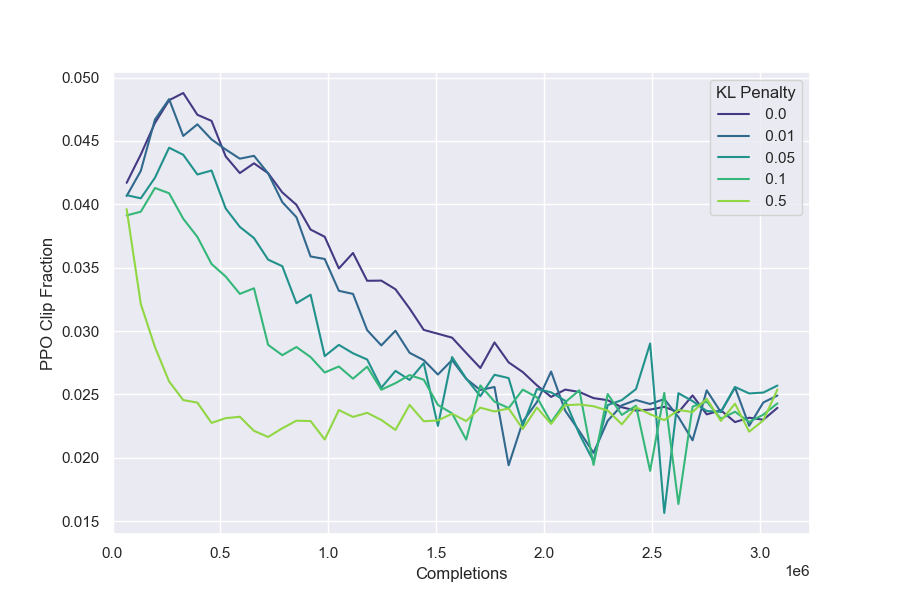}
    \caption{The fraction of updates clipped by PPO.}
    \label{fig:a_clipfrac}
\end{figure}

\begin{table}[]
    \centering
    \textbf{What is full of holes but still holds water?} \\
    \begin{tabular}{c|p{5cm}|c|c}
        $n$ & Generation & Proxy score & Gold score \\
        1 & Crime{\textbackslash{}n}Mussels are typically found to be full of holes but will the moisture still be retained and will it hurt my eyes? The general response is yes as the clam is generally of a moistest nature. However, it all depends on the type of clam you are purchasing.\textless{}|endoftext|\textgreater{}& -0.1922 & -0.5225\\
        3 & Most likely a pipe is having trouble staying full.\textless{}|endoftext|\textgreater{} & 0.0322 & -0.0165\\
        {10} & {A sponge\textless{}|endoftext|\textgreater{}} & {0.2336} & {0.4828} \\
        30 & When something is full of holes, it is used for stirring or moving liquid.\textless{}|endoftext|\textgreater{} & 0.6534 & -0.1543\\
        100 & A tornado is usually a swirling cloud of swirling air with a spinning jet which stirs up winds full of sharp debris.\textless{}|endoftext|\textgreater{} & 0.8968 & -0.3367\\
        300 & A tornado is usually a swirling cloud of swirling air with a spinning jet which stirs up winds full of sharp debris.\textless{}|endoftext|\textgreater{} & 0.8968 & -0.3367\\
        1000 & A tornado is usually a swirling cloud of swirling air with a spinning jet which stirs up winds full of sharp debris.\textless{}|endoftext|\textgreater{} & 0.8968 & -0.3367\\
        3000 & A bore hole is a hole drilled into a rock for the purpose of exploring a fossil-bearing sedimentary or bedrock deposit.\textless{}|endoftext|\textgreater{} & 0.9003 & 0.2733\\
        10000 & A bore hole is a hole drilled into a rock for the purpose of exploring a fossil-bearing sedimentary or bedrock deposit.\textless{}|endoftext|\textgreater{} & 0.9003 & 0.2733\\
        30000 & A pothole is a structural vulnerability that allows water to penetrate its cavity and cause damage to passing vehicles or the surface it rests on.\textless{}|endoftext|\textgreater{} & 0.9527 & 0.5490\\
        
    \end{tabular}
    \caption{A sample of the BoN answers on a single InstructGPT question (policy=1.2B, proxy RM=12M). For each individual question, the gold scores do not follow as clean a trend as they do when averaged over many questions as in \cref{fig:rmsweep}.}
    \label{tab:my_label}
\end{table}

\begin{figure}[hp]
    \centering
    \begin{subfigure}{\linewidth}
    \includegraphics[width=\linewidth]{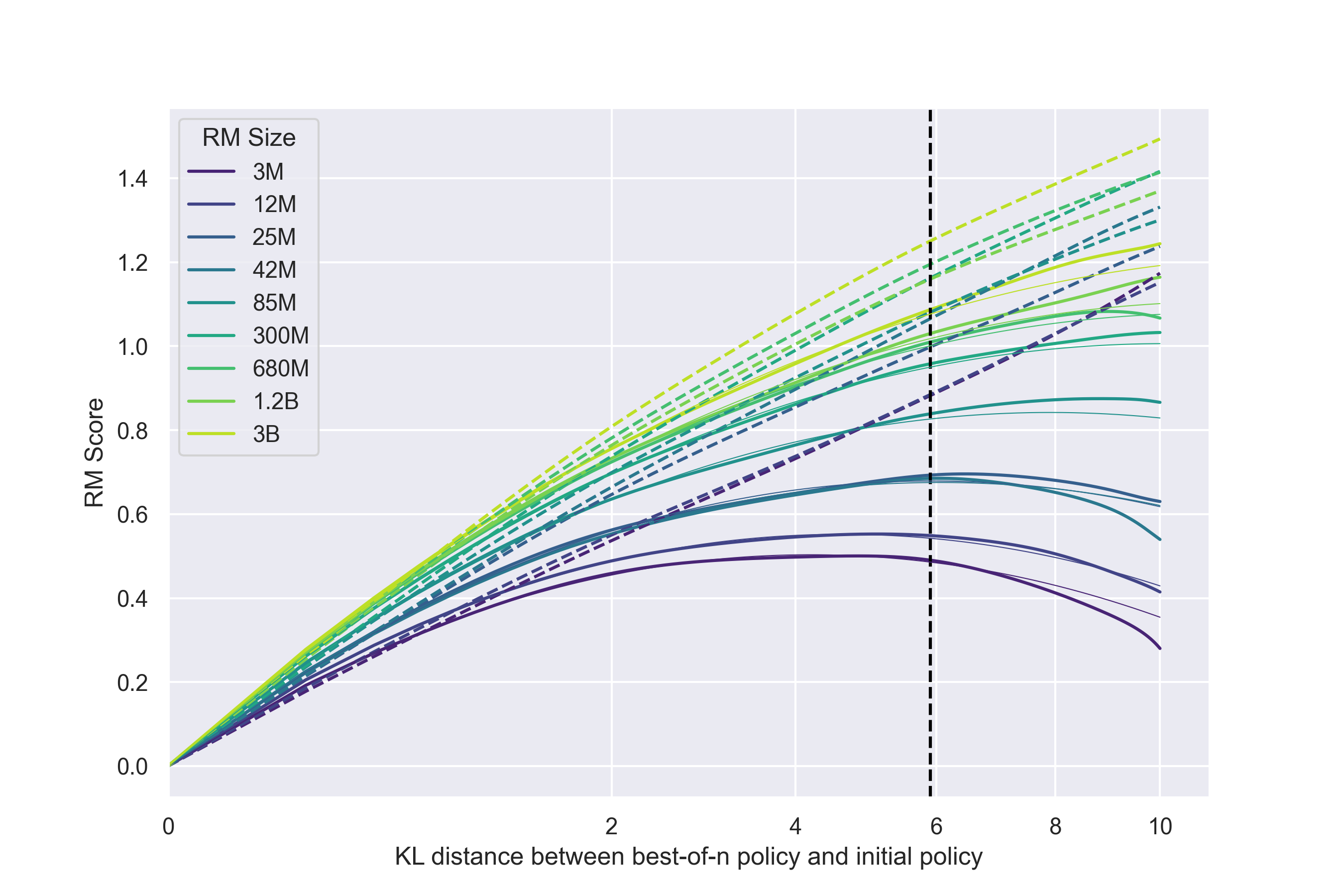}
    \caption{BoN}
    \label{fig:a_bon_rmsweep_6}
    \end{subfigure}
    \begin{subfigure}{\linewidth}
    \centering
    \includegraphics[width=\linewidth]{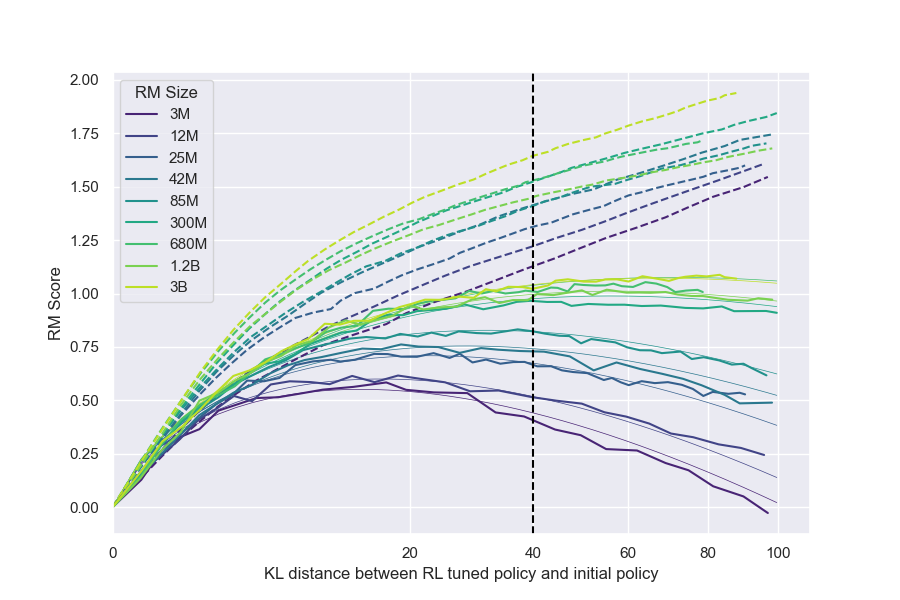}
    \label{fig:a_rl_rmsweep_40}
    \end{subfigure}
    \caption{Extrapolation quality of fits in \cref{fig:rmsweep}. The regressions (shown in faint lines) are only fit to data to the left of the vertical black dotted lines. In the case of BoN, this represents a true advance prediction, as the functional form was chosen without collecting any data past a KL of 6 nats.}
    \label{fig:rmsweep_extrap}
\end{figure}

\end{document}